
\documentclass[twocolumn, switch]{article} 

\usepackage{preprint}

\usepackage{amsmath, amsthm, amssymb, amsfonts}
\usepackage{bm}
\usepackage{amsmath}
\usepackage{graphicx}
\usepackage{subfigure}
\usepackage{algorithm}
\usepackage{algorithmic}
\usepackage{setspace}
\usepackage{amssymb}
\usepackage{booktabs}
\usepackage{array}
\usepackage{color}
\usepackage{multirow}
\usepackage{multicol}
\usepackage{threeparttable}
\usepackage{url}
\usepackage[page]{appendix}

\usepackage[numbers,square]{natbib}
\bibliographystyle{unsrtnat}

\usepackage[utf8]{inputenc}	
\usepackage[T1]{fontenc}	
\usepackage{xcolor}		
\usepackage[colorlinks = true,
            linkcolor = blue,
            urlcolor  = blue,
            citecolor = blue,
            anchorcolor = black]{hyperref}	
\usepackage{booktabs} 		
\usepackage{nicefrac}		
\usepackage{microtype}		
\usepackage{lineno}		
\usepackage{float}			

\usepackage{newfloat}
\DeclareFloatingEnvironment[name={Supplementary Figure}]{suppfigure}
\usepackage{sidecap}
\sidecaptionvpos{figure}{c}

\usepackage{titlesec}
\titlespacing\section{0pt}{12pt plus 3pt minus 3pt}{1pt plus 1pt minus 1pt}
\titlespacing\subsection{0pt}{10pt plus 3pt minus 3pt}{1pt plus 1pt minus 1pt}
\titlespacing\subsubsection{0pt}{8pt plus 3pt minus 3pt}{1pt plus 1pt minus 1pt}

\usepackage{graphics}
\usepackage{hyperref}
\usepackage{color}
\usepackage{multirow}
\usepackage{hhline}
\usepackage{subfigure}

\title{Predicting the Popularity of Micro-videos with Multimodal Variational Encoder-Decoder
Framework}

\usepackage{authblk}

\author{Yaochen Zhu}
\author{Jiayi Xie}
\author{Zhenzhong Chen\thanks{\tt{zzchen@ieee.org}}}
\affil{School of Remote Sensing and Information Engineering, Wuhan University}

\begin{document}

\twocolumn[ 
  \begin{@twocolumnfalse} 
  
\maketitle

\begin{abstract}

As an emerging type of user-generated content, micro-video drastically enriches people's entertainment experiences and social interactions. However, the popularity pattern of an individual micro-video still remains elusive among the researchers. One of the major challenges is that the potential popularity of a micro-video tends to fluctuate under the impact of various external factors, which makes it full of uncertainties. In addition, since micro-videos are mainly uploaded by individuals that lack professional techniques, multiple types of noise could exist that obscure useful information. In this paper, we propose a multimodal variational encoder-decoder (MMVED) framework for micro-video popularity prediction tasks. MMVED learns a stochastic Gaussian embedding of a micro-video that is informative to its popularity level while preserves the inherent uncertainties simultaneously. Moreover, through the optimization of a deep variational information bottleneck lower-bound (IBLBO), the learned hidden representation is shown to be maximally expressive about the popularity target while maximally compressive to the noise in micro-video features. Furthermore, the Bayesian product-of-experts principle is applied to the multimodal encoder, where the decision for information keeping or discarding is made comprehensively with all available modalities. Extensive experiments conducted on a public dataset and a dataset we collect from Xigua demonstrate the effectiveness of the proposed MMVED framework. 

\end{abstract}

\vspace{0.4cm}

  \end{@twocolumnfalse} 
] 

\section{Introduction}\label{sec:introduction}
{\let\thefootnote\relax\footnote{This work was supported in part by National Key R\&D Program of China under contract No. 2017YFB1002202. (Corresponding author: Zhenzhong Chen, E-mail:zzchen@ieee.org)}}In recent years, with the prevalence of social media, people grow increasingly enthusiastic about spreading various forms of their creations on the Internet, and the consumption of user-generated contents (UGCs) has gradually become an indispensable daily entertainment activity for most of the populations. Interestingly, for tons of UGCs posted on social media every day, most of them soon fall into oblivion, while some are able to attract lots of attention and disseminate widely, reflected by receiving comparatively large amounts of views, likes, comments, reposts. 

Various internal and external factors may affect the popularity of UGCs, including not only the quality of their content, the influence of their publishers, but even the release timing as well \cite{DBLP:conf/aaai/LiaoXLHLL19, chen2016micro}. Accurate predictions of the popularity level of UGC could be of great benefit: It allows the service providers to make strategic decisions as to the management of their resources, such as caching and network optimization \cite{DBLP:journals/wc/ZengLCYHWL18}, and provides users with more satisfactory personal recommendations \cite{DBLP:journals/access/BielskiT18}. Therefore, the popularity prediction of UGCs has received lots of attention among researchers.

Extensive work has been done in the domain of popularity prediction for traditional forms of UGCs, such as articles \cite{DBLP:conf/aaai/LiaoXLHLL19}, news \cite{DBLP:conf/epia/FernandesV015}, and images \cite{mcparlane2014nobody}. Generally speaking, early studies on popularity prediction of UGC can be divided into two categories, i.e., feature-driven methods and generative (time series-driven) methods \cite{DBLP:journals/ijon/ChenKXM19}. Feature-driven approaches first extracted a large number of features related to the  UGC contents, the profile of users, or the social networks, and trained machine learning models such as Support Vector Machine (SVM) \cite{mcparlane2014nobody} or Random Forest (RF) \cite{DBLP:conf/epia/FernandesV015} to optimize a mapping function from the feature space to a pre-defined popularity space. They mainly focused on feature engineering techniques and could achieve good performance, provided that the extracted features are effective. On the other hand, for analysis of the popularity evolution pattern of UGCs over time, generative approaches utilized the temporal regularities of the popularity curve at the early stage to fit an autoregressive model with the linear or non-linear dynamics to predict its future trends \cite{DBLP:journals/cacm/SzaboH10,  DBLP:journals/ijon/HuHFSN16, DBLP:conf/wsdm/PintoAG13}; most of them relied on strong hypotheses about the popularity accumulation mechanism. Recently, inspired by the outstanding performance of deep learning models, several deep learning-based methods for UGC popularity prediction have been proposed \cite{DBLP:journals/ijon/ChenKXM19, DBLP:journals/access/BielskiT18} in both categories, taking advantage of the massive representative powers of convolutional neural networks (CNNs) to extract more powerful features and recurrent neural network (RNNs) to capture the more complicated temporal relationships. 

However, as an emerging form of UGC, litter efforts have been dedicated to the understanding of micro-videos. Compared to popularity prediction for traditional UGCs or professionally made videos such as movies, predicting the popularity of micro-videos faces its own challenge due to the following factors: (1) \textbf{Heterogeneity}: Aside from a short video that lasts from 6 seconds to 60 seconds, several other components are usually attached, such as background music, titles, and hashtags, each describing the micro-video from a different perspective. Therefore, micro-videos are way richer in information than other forms of UGCs, which renders it more difficult to extract relevant features and fuse them effectively across modalities to explain the observed popularity trend. (2) \textbf{External Uncertainty}: Even if features that are highly informative to the popularity of micro-videos could be properly extracted and fused, since micro-videos are made to gratify the mercurial taste of the massive online audiences, the popularity of a micro-video could vary under the impacts of lots of external uncertainties such as the time of publishing, change of trends or influence from online-celebrities and other social media. Thus, the observed popularity usually appears a certain amount of randomness. (3) \textbf{Internal Noise}: Moreover, compared to the professional teams by whom the movies are made, the uploaders of micro-videos are usually unaware of various expertise such as film grammars to convey affective information to the audience\cite{wang2019video}, or lack the professional devices to shoot the video, which may lead to poor quality in both visual and acoustic contents. Besides, when users upload micro-videos online, they may make up sensational titles and tags that are irrelevant to the content just for the eye-catching effect, and the profile of users could be falsified as well, which makes the textual and user features untrustworthy. 

Therefore, in order to address these challenges, we propose a multimodal variational encoder-decoder (MMVED) framework for micro-video popularity prediction. Unlike the majority of previous approaches where the map from features of a piece of UGC to its popularity level is assumed to be deterministic, MMVED learns a stochastic Gaussian embedding of a micro-video that is informative to the prediction of its future popularity level, while preserves the inherent randomness of the popularity caused by various external uncertainties simultaneously. In particular, MMVED takes the maximization of a variational approximation to the information bottleneck as its objective, such that only relevant cues in the information-rich micro-video features can be extracted into the hidden representation, ignoring the irrelevant and noisy parts. Specifically, in the multimodal encoder, we adopt the Bayesian product-of-experts strategy to fuse the modality-specific embeddings, where both the information heterogeneity and difference of uncertainty of all modalities are comprehensively considered. MMVED could also be shown to be parameter-economic to deal with the modalities missing problem in the test phase.

Please note that a preliminary conference version of this paper has been presented at WWW 2020 \cite{MMVED-WWW2020}. Compared to the initial paper that constructed the MMVED objective by adding the Variational Auto-Encoder (VAE) loss as an ad-hoc regularizer to the vanilla encoder-decoder target, this manuscript rigorously proves that such MMVED objective in essence lower-bounds the information bottleneck objective through the deep variational information bottleneck (D-VIB) theory, which further reveals the underlying information-sifting and denoising mechanism of the proposed framework; Besides, we demonstrate both theoretically and empirically that various form of decoders, such as MLP with cross-entropy loss, MLP with MSE loss or RNN could be incorporated into the MMVED framework for corresponding tasks of UGC popularity prediction, such as popularity classification, regression, and temporal regression, etc. Moreover, more advanced feature engineering techniques are utilized to characterize different aspects of the micro-video, which take advantage of recent advances of deep neural networks in both computer vision and audio processing community.

The remainder of this paper is organized as follows: Section \ref{SEC:BACKGROUNDS} surveys related work concerning the popularity prediction task of UGCs; Section \ref{SEC:METHODOLOGY} expounds the proposed MMVED framework in detail; Section \ref{SEC:FEATURE} describes the adopted feature engineering techniques; Section \ref{SEC:NUS} and \ref{SEC:XIGUA} evaluate the proposed model on two micro-video popularity prediction datasets and analyze experimental results; Finally, section \ref{SEC:CONCLUSION} concludes the paper.

\section{Related Work}
\label{SEC:BACKGROUNDS}

Due to its importance in recommendation, advertising, and many other applications, popularity prediction of UGCs on social media receives considerable attention in both industry and academia. The premiere step to predict the popularity of UGCs is to measure it numerically. Generally, the popularity of a piece of UGC is defined by the volume of positive response it receives, which can be estimated by the number of viewers, likes, comments, and reposts. In practice, the weighted average of any combinations of these indexes could adequately serve as an indicator of popularity levels of UGCs, and they are widely used in researches to construct the groundtruth for UGC popularity prediction tasks.

For popularity prediction of online textual contents, such as hashtags, microblogs, and articles, most studies focused on combining the representative features from the textual contents \cite{DBLP:conf/epia/FernandesV015}, \cite{DBLP:conf/hicss/Doong16} and the social context, such as the way users are linked, to train a popularity predictor. One exemplar work is \cite{ma2013predicting}, where several content features from hashtags of the tweets were fused with contextual features from the user social graph to train multiple outside classifiers. Besides, for popularity prediction of Reddit comments, Zayats and Ostendorf \cite{zayats2018conversation} extracted the textual features from the reviews and combined the user interaction graph with a long short term memory (LSTM) network to model the influence of user connections and temporal evolution to the popularity of comments over time.

As for predicting the popularity of images and long-videos \cite{roy2013towards} (such as those posted on YouTube) where rich information is contained in visual or aural modalities, multimodal learning that fuses information from different views is one of the most commonly used techniques. Li et al. \cite{DBLP:conf/cikm/LiMWLX13} introduced a novel propagation-based popularity prediction method by considering both intrinsic video attractiveness and the underlying propagation structure. Khosla et al. \cite{DBLP:conf/www/KhoslaSH14} explored the relative significance of individual features involving multiple visual features, as well as various social context features, including the number of views and contacts. Cao et al. \cite{DBLP:journals/corr/abs-1906-09032} utilized two coupled neural networks to iteratively capture the cascading effect in information diffusion to predict future popularity. Trzcinski et al. \cite{trzcinski2017predicting} predicted the popularity of the video with both the visual clues and the early popularity pattern of the video after its publish over a certain period. However, none of these methods considered the cold start scenario. In order to deal with such a challenge, McParlane et al. \cite{mcparlane2014nobody} took a content-based strategy and utilized only visual appearance and user context for Flickr image popularity prediction. Bielski et al. \cite{DBLP:journals/access/BielskiT18} predicted the popularity of videos before they are published, by exploiting the spatio-temporal characteristics of videos through a soft self-attention mechanism, and they intuitively interpreted the impact of contents on video popularity by Grad-CAM Algorithm. 

Although an emerging form of UGC, several pioneer work have been dedicated towards the popularity prediction of micro-videos.  Chen et al. \cite{chen2016micro} first employed four types of heterogeneous features, i.e., the visual, acoustic, textual, and social features to describe the characteristics of micro-videos, and proposed a transductive multimodal learning model to regress the popularity index of micro-videos after their release. Trzci{\'n}ski
and Przemys{\l}aw \cite{trzcinski2017predicting}, on the other hand, focused on using the early popularity pattern of the popularity curve to predict the trend for the time after, which extended the micro-video popularity prediction task to an online manner. Afterwards, Jing et al. \cite{DBLP:journals/tkde/JingSNBLW18} thoroughly discussed the detrimental effects of internal noise to the micro-video analysis studies, and they augmented the multi-view learning method with a low-rank constraint, such that only a few principal patterns in micro-video features are allowed to be kept in its representation.

The most salient character that distinguishes our methods from the above ones is that we also consider the inherent uncertainty in popularity and treats all external uncertain factors as randomness, which can be properly preserved when we learn the Gaussian embedding of the micro-videos. Besides, compared with \cite{DBLP:journals/tkde/JingSNBLW18}, which also considered the adverse effect of noise, we resort to combining the representative learning power of the deep neural network and the denoising ability of information bottleneck structure, where for each sample, the information that is allowed to be extracted into the hidden representation is actively learned and dynamically decided. Furthermore, in \cite{DBLP:journals/tkde/JingSNBLW18}, the popularity was a single numerical value, whereas our model is also suitable for tasks where the popularity is represented by a time-related sequence, which is more difficult since it requires understanding  the hidden evolutionary pattern of the popularity trend.

\section{The Multimodal Variational Encoder-Decoder Framework}
\label{SEC:METHODOLOGY}

\subsection{Problem Formulation}

\begin{figure*}
\centering
\includegraphics[scale=0.76]{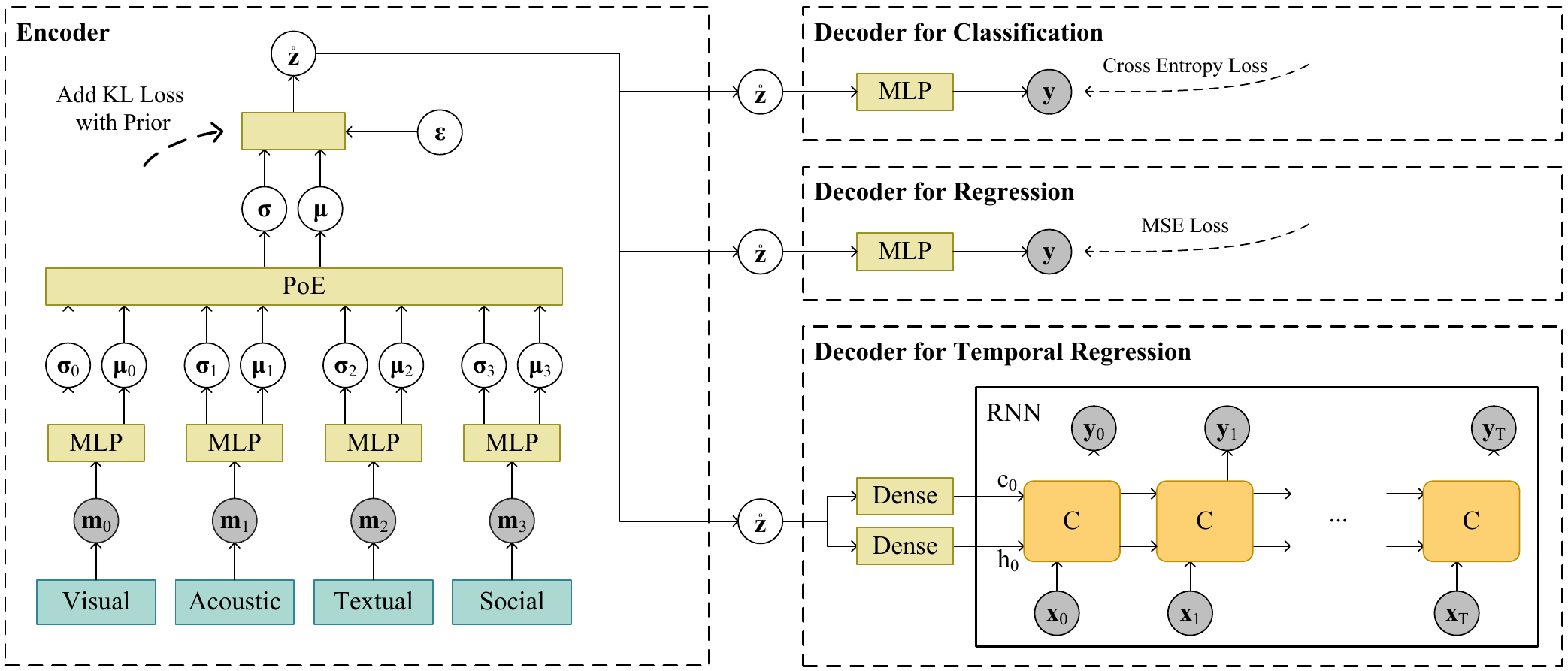}
\caption{The framework of the proposed MMVED model for various UGC popularity prediction tasks, where the observed variables are shaded gray while the hidden ones are white. The forward steps of MMVED consist of three steps. First, features from each modality $m_{i}$ of a micro-video are encoded into modality-specific hidden representations and are fused together according to the Bayesian product-of-experts principle to compute the micro-video level hidden representation $\mathbf{z}$, which follows $\mathcal{N}(\boldsymbol{\mu}, \boldsymbol{\sigma})$. Its KL divergence with the uninformative prior is computed as the "information bottleneck" constraint, and a $\lambda$ is specified to control its "narrowness". Second, $\mathring{\mathbf{z}}$ are sampled from $\mathbf{z}$ with the reparameterization trick. Third, the popularity index of the micro-video $\mathbf{y}$ is decoded from the $\mathring{\mathbf{z}}$ through the suitable decoder.}
\label{FIG:MODEL}
\end{figure*}

Consider we collect a micro-video popularity prediction dataset $\mathcal{ D }$ of the form $\left\{ \left( \mathbf{m} ^ { 1 },  \mathbf{y} ^ { 1 } \right) , \ldots , ( \mathbf{m} ^ { N }, \mathbf{y}  ^ { N }) \right\}$, where $\mathbf{m} ^ {i} = \{m^{i}_{1}, \ldots, m^{i}_{K}\}$ is the features extracted from $K$ modalities, and $\mathbf{y} ^ {i}$ is the corresponding popularity groundtruth which could assume one of the following three forms: $\in \{0, 1\}$ for classification, $\in \mathbb{R}$ for regression, or $ \in \mathbb{R}^{T} $ for temporal regression, our goal is to learn a hidden representation $\mathbf{z}$ of a micro-video based on the integrated information from all the modalities that is highly informative for its potential  popularity level, while being robust to the external uncertain factors and the internal feature noises. The main notations used in section \ref{SEC:METHODOLOGY} are summarized in Table \ref{tab:symbols} for the convenience of reference.

\begin{table}
	\centering
	\caption{Description of Symbols Used in Section \ref{SEC:METHODOLOGY}}
	\label{tab:symbols}
	\begin{tabular}{l|l}  
		\toprule
		Symbol & Description \\
		\midrule
		$K$                    & number of modalities in a micro-video\\
		$N$                    & number of samples of the dataset\\
		$L$                    & number of samples to estimate the gradient\\
		$D$                    & dimension of the Gaussian embedding\\ 
		
		$\mathbf{m}$           & variable for the set of multimodal features\\	
		$\mathbf{y}$           & variable for the popularity groundtruth \\
		$\mathbf{z}$           & variable for the Gaussian embedding \\

		$\boldsymbol{\mu}$         & mean of the Gaussian embedding\\
		$\boldsymbol{\sigma}$      & standard deviation of the Gaussian embedding\\

		$\boldsymbol{\phi}$                 & trainable parameters of the encoder  $q(\mathbf{z}|\mathbf{m})$\\
		$\boldsymbol{\theta}$               & trainable parameters of the decoder  $q(\mathbf{y}|\mathbf{z})$\\

		$\lambda$              & weight in the training objective\\
		
		$\operatorname{MI}$       & mutual information operator\\
		$\operatorname{LLD}$      & expected log-likelihood term of $\operatorname{IBLBO}$\\
		$\operatorname{IBLBO}$    & information bottleneck lower bound\\
		\bottomrule
	\end{tabular}
\end{table}

\subsection{Training Objective}

As the hidden representation $\mathbf{z}$ is derived from micro-video features $\mathbf{m}$ whereas the popularity $\mathbf{y}$ is predicted based on $\mathbf{z}$, their relationship can be represented by a Bayesian chain $\mathbf{m} \rightarrow \mathbf{z} \rightarrow \mathbf{y}$ and can be modeled by an encoder-decoder framework.  Since the mapping from the content of the video to its popularity may not be deterministic due to some unpredictable external factors, it is reasonable to assign $\mathbf{z}$ some randomness, and represent both the encoder and decoder as probabilistic distributions. In this paper we assume the hidden representation $\mathbf{z}$ lies in $D$-dimensional Gaussian space, and use $p(\mathbf{z}|\mathbf{m})$ and $p(\mathbf{y}|\mathbf{z})$ to denote the encoder and  decoder distribution respectively.  

Due to the general low quality of micro-videos, not all information from input modalities are beneficial for prediction of their popularity, and thus the noise of $\mathbf{m}$ should not be extracted into the hidden representation. Towards this end, we first define a common criteria to measure the $relevance$ level of one random variable to another, i.e. the mutual information (MI), which is formulated as follows:

\begin{equation}
\label{EQ:MI}
\mathbf{MI}(\mathbf{a}, \mathbf{b}) = \mathbb { E } _ {p(\mathbf{a}, \mathbf{b})}\left[ \log p(\mathbf{a}, \mathbf{b}) - \log p(\mathbf{a})p(\mathbf{b})\right]
\end{equation}

\noindent Since the learned embedding $\mathbf{z}$ is expected to pick up only cues in the input $\mathbf{m}$ that are \textit{relevant} to the popularity $\mathbf{y}$ while ignoring the noisy distractors, utilizing the MI, we can construct the following constrained optimization objective:

\begin{equation}
\label{EQ:COBJ}
\max \mathbf{MI}(\mathbf{z}, \mathbf{y}), \text {s.t.} \mathbf{MI}(\mathbf{m}, \mathbf{z}) \leq I_{c}
\end{equation}

\noindent where $I_{c}$ is maximum limit of information contained in $\mathbf{z}$ of $\mathbf{m}$. The equation above forces the stochastic embedding $\mathbf{z}$ to be expressive of $\mathbf{y}$ while being compressive of $\mathbf{m}$ at the same time, which provides a mechanism for the model to predict the true popularity level while accessing the minimal amount of information from the input to block the noise. By introducing a Lagrange multiplier $\lambda$, the constrained optimization problem above can be proved to be equivalent to the following unconstrained one, which has the same form as the information bottleneck first proposed in \cite{tishby99information}.

\begin{equation}
\label{EQ:UCOBJ}
\max R_{IB}=\mathbf{MI}(\mathbf{z}, \mathbf{y})-\lambda \cdot \mathbf{MI}(\mathbf{z}, \mathbf{m})
\end{equation}

\noindent Intuitively, $\lambda$ determines the degree of penalty associated with keeping more information of $\mathbf{m}$ in $\mathbf{z}$, which controls the trade-off between minimum usage of information in the input and predicting the popularity target. 

Since the decoder distribution $p(\mathbf{y}|\mathbf{z})$ can take the form of any valid conditional distributions and most of which are not even differentiable, it is intractable to directly calculate the two MI terms in $R _ {IB}$ according to Eq. \ref{EQ:MI} and optimize. Therefore, we resort to the variational approach \cite{blei2017variational}, which posits that the decoder comes from a tractable family of distribution $\mathcal{Q}$ and finds a distribution $q(\mathbf{y}|\mathbf{z})$ in that family that is closest to the optimal decoder distribution  measured by the KL-divergence. According to the deep variational information bottleneck (D-VIB) theory \cite{blei2017variational}, for  $q(\mathbf{y}|\mathbf{z})$, the $R _ {IB}$ can be proved to be lower bounded by: 

\begin{equation}
\label{EQ:IBLBO}
\begin{aligned}
\mathbf{MI}(\mathbf{z}, \mathbf{y})-\lambda \cdot \mathbf{MI}(\mathbf{z}, \mathbf{m}) \geq \operatorname{IBLBO} & \triangleq \\
\mathbb { E } _ {p (\mathbf{m}, \mathbf{y}) } \Bigl[ \mathbb { E } _ {p(\mathbf{z}|\mathbf{m})} \left[\log q (\mathbf{y}|\mathbf{z})\right] - \lambda \cdot \mathbf { KL }  ( p (\mathbf{z}&|\mathbf{m}) \| p (\mathbf{z}) ) \Bigl]
\end{aligned}
\end{equation}

\noindent which we note as the Information Bottleneck Lower BOund (IBLBO). The IBLBO has the same form as the OBJ we used in \cite{MMVED-WWW2020}. However, in \cite{MMVED-WWW2020}, the OBJ is constructed by ad-hocly adding a VAE loss to the encoder-decoder objective. The full proof of IBLBO can be referred to in Appendix \ref{APP:DIB}. Hence, we can optimize IBLBO as a surrogate for the intractable $R _ {IB}$. Note that the first expectation of Eq. \ref{EQ:IBLBO} is taken w.r.t the data distribution $p(\mathbf{m}, \mathbf{y})$. Given that the true data distribution is usually  inaccessible, the empirical data distribution is used instead, and in the rest we would omit the first expectation for simplification. The IBLBO resembles the likelihood-prior trade-off commonly found in Statistics, since the first term in the objective is the expected log-likelihood, which encourages probability density of the encoder to be put where the hidden representation could best explain the observed popularity trend, whereas the second term is the KL-divergence of the encoder distribution with the uninformative prior, which penalizes the deviation of $\mathbf{z}$ from the prior for keeping excessive information in the input $\mathbf{m}$ and serves as a regularizer. We name the model represented by Eq. \ref{EQ:IBLBO} as Multimodal Variational Encoder-Decoder (MMVED) for the rest part of the paper.

\subsection{The Multimodal Encoder}
\label{SEC:MMENCODER}

The stochastic encoder for the micro-video $p(\mathbf{z}|\mathbf{m})$ is parameterized as a deep neural network (DNN), more specifically, a multi-layer perceptron (MLP), which is basically a stack of fully connected layers with intermediate activations. A naive way to construct the encoder network $p(\mathbf{z}|\mathbf{m})$ is to adopt the early fusion strategy \cite{DBLP:conf/mm/SnoekWS05}, which takes the concatenated features from all modalities as its input and outputs the parameters of the encoder distribution, i.e., its mean $\boldsymbol{\mu}$ (semantic part) and and logarithm of standard deviation $\boldsymbol{\sigma}$ (uncertainty and noise part):

\begin{equation}
\label{EQ:NAIVE}
[\boldsymbol{\mu}, \log \boldsymbol{\sigma}] = \operatorname{MLP}\left(\operatorname{concat}\left([{\mathbf{m} _ {1}, \ldots, \mathbf{m} _ {K}}]\right)\right)
\end{equation}

However, such approach was shown by previous work to have unsatisfactory performance for micro-video popularity prediction task \cite{chen2016micro}, since it is unable to account for the relatedness among multiple modalities. Thus, in order to make inference of $\mathbf{z}$ based on the complementary information from all modalities, inspired by the recent advance in multimodal variational inference frameworks \cite{DBLP:conf/nips/WuG18} \cite{zhu2019multimodal}, we assume conditional independence among features from each modality given the hidden embedding, i.e. $(\mathbf{m} _ {i} \perp \mathbf{m} _ {j} \, | \, \mathbf{z})  \quad \forall i,j < K$ and $i \neq j$. Then, as was shown by \cite{DBLP:conf/nips/WuG18}, the joint inference distribution can be factorized into the product of modality-specific encoder distributions $p ( \mathbf{z} | \mathbf{m} ) \propto p(\mathbf{z})\prod _ { i = 1 } ^ { K } p ( \mathbf{z}|\mathbf{m} _ { i } )$, which is a typical Bayesian product-of-experts (PoE) system \cite{DBLP:journals/neco/Hinton02}. We refer interested readers to \cite{DBLP:conf/nips/WuG18} for the full prove of above deductions, and some of the most important steps are included in Appendix \ref{APP:POE} for the self-containment of our paper. Such factorization means that we can first use the modality-specific encoder networks to compute the parameters of the latent representation for each modality as follows:

\begin{equation}
\label{EQ:MOD_SPE_GAUSSIAN}
[\boldsymbol{\mu} _ {i}, \log \boldsymbol{\sigma} _ {i}] = \operatorname{MLP _ {i}}(\mathbf{m} _ {i}), \quad i \in {1, \ldots, K} 
\end{equation}

\noindent Then, by using the property of multivariate Gaussian, the parameters of the hidden representation for the whole micro-video can be calculated as:

\begin{equation}
\begin{aligned}
\label{EQ:WHOLE_GAUSSIAN}
\boldsymbol{\mu}  &= \sum \left(\boldsymbol{\mu} _ {i} \odot \left(1 / \boldsymbol{\sigma} _ {i} ^ {2}\right) / \sum{1 / \boldsymbol{\sigma} _ {i} ^ {2}}\right) \\
\boldsymbol{\sigma} &= \operatorname{sqrt}\left(1 / \sum{1 / \boldsymbol{\sigma} _ {i} ^ {2}}\right) \\ 
\end{aligned}    
\end{equation}

\noindent where $\odot$ is the element-wise product operation.  \textbf{Eq. \ref{EQ:WHOLE_GAUSSIAN} shows that, under the conditional independence assumption,  the semantic part $\boldsymbol{\mu}$ of the video-level hidden representation $\mathbf{z}$ is in essence the average of modality-specific $\boldsymbol{\mu}_{i}$ weighted by the reciprocal of the corresponding variance $\boldsymbol{\sigma} _ {i} ^ {2}$. With such calculation, for experts with greater precision, which indicates that the information from their associated modalities bear less uncertainty and more relevance to the popularity prediction end, they will have more influence (larger weights) over the overall hidden representation than those with higher variance.} Another by-product of adopting such factorization is that it is easy to deal with modality missing problem in the test phase, since under such cases the only change to our framework is to remove the corresponding modality-specific encoder networks and re-compute the PoE distribution with available modalities, eliminating the needs of additional inference networks and multi-stage training regimes. 

\subsection{The Variational Decoders}

The choice for the tractable family $\mathbf{Q}$ of the decoder distributions $q(\mathbf{y} | \mathbf{z})$ is conditioned on the specific popularity prediction task at hand, and we will discuss three commonly faced tasks and the corresponding decoder structures.

\subsubsection{Classification: MLP with Cross-entropy Loss}

If the popularity groundtruth $\mathbf{y}$ is represented as a binary variable with 1 indicating popular and 0 otherwise, the family $\mathbf{Q}$ of the decoder distributions $q(\mathbf{y} | \mathbf{z})$ can be chosen as an MLP where the output $\mathbf{o} _ {t}$ is squashed through a Sigmoid function as follows:

\begin{equation}
 q(\mathbf{y} | \mathbf{z}) = \operatorname{Sigmoid}(\operatorname{MLP}(\mathbf{z})) 
\end{equation}

\noindent where $\operatorname{Sigmoid}(\mathbf{x}) = 1 / \left( 1 + e ^ { - \mathbf{x} } \right)$. The maximization of the expected $\log q(\mathbf{y} | \mathbf{z})$ term in Eq. \ref{EQ:IBLBO} is equivalent to minimization of the binary cross-entropy loss between the MLP prediction and the groundtruth.

\subsubsection{Regression: MLP with MSE Loss}

However, representing the popularity level of a micro-video as a binary variable is too coarse in granularity, which fails to discriminate micro-videos with different popularity degrees. Therefore, it's more common to define the $\mathbf{y}$ as a continuous variable. Under such circumstances, in order to calculate $q(\mathbf{y} | \mathbf{z})$, we could make the following assumption:

\begin{equation}
    q(\mathbf{y} |\mathbf{z}) \propto \operatorname{e} ^ { - (\mathbf{y} - \mathbf{o}) ^ { 2 }}, \; \text{where} \; \mathbf{o} = \operatorname{MLP}(\mathbf{z})
\end{equation}

\noindent which is equivalent to assume the output $\mathbf{o} $ of the MLP specifies the mean of a unit-variance Gaussian distribution. Under such assumption, the $\log q(\mathbf{y} | \mathbf{z})$ term in Eq. \ref{EQ:IBLBO} can be rewritten into the following form: 

\begin{equation}
\log q(\mathbf{y} | \mathbf{z}) = \sum - (\mathbf{y}-\mathbf{o}) ^ {2} + C
\end{equation}

\noindent where $C$ is a constant, and the maximization of which is equivalent to minimizing the mean square error (MSE) loss commonly used in the regression task.

\subsubsection{Temporal Regression: RNN with MSE Loss}

Moreover, if the target is a popularity sequence observed at interval after the post of a micro-video, the tractable family $\mathbf{Q}$ of the variational decoder $q(\mathbf{y} | \mathbf{z})$ can be chosen as the recurrent neural network (RNN). RNN captures the dynamic patterns of a sequence through the maintenance of a hidden state, whose value in one timestep is composed of information passed on from previous timestep and incorporation of new information from the outside. RNN is comprised of two parts: (1) a temporal dynamic that determines the internal evolution of hidden state; (2) a mapping from the RNN state to the output. Mathematically, the two parts of an RNN can be formulated as: 

\begin{equation}
\begin{aligned}
\mathbf{h} _ { t } &= \operatorname{H}(\mathbf{x} _ {t}, \mathbf{h} _ {t-1}) = \operatorname{Nonlinear}(\mathbf{W} _ { hh } \mathbf{h} _ {t-1} + \mathbf{W} _ { xh } \mathbf{x} _ {t})\\
\mathbf{o} _ { t } &= \operatorname{O}(\mathbf{h} _ {t})=\operatorname{Nonlinear}(\mathbf{W} _ { ho } \mathbf{h} _ {t})\\
\end{aligned}
\end{equation}

\noindent where $\mathbf{W} _ { xh }, \mathbf{W} _ { hh }, \mathbf{W} _ { ho }$ are the input-state, state-state, state-output transition matrices respectively. For vanilla RNN, $\mathbf{z}$ could be directly taken as its initial state $\mathbf{h} _ { 0 }$. However, if long-short term memory (LSTM) \cite{DBLP:journals/neco/HochreiterS97}, which addresses the exploding gradient issues of vanilla RNNs by the introduction of gating mechanism, is used, $\mathbf{z}$ could be transformed through two distinctive dense layers to get its initial context variable and hidden state separately. The input $\mathbf{x} _ {t}$ at each timestep is set to be the standardized absolute time of that prediction timestep (which is different from the RNN timestep that denotes the relative time to the start) based on the observation that the popularity level of a micro-video tends to vary in a regular fashion at different times of the day. Besides, in order to avoid the error accumulation due to the exposure bias \cite{DBLP:journals/corr/RanzatoCAZ15}, predicted results from the previous timestep is not fed into the RNN cell at the next timestep as additional inputs. 

Similar to the regression task, we can assume that the output of the RNN specifies the mean of a unit-variance Gaussian variable for the popularity level at each timestep:

\begin{equation}
    \label{EQ:TEMP_GAUSS}
    q(\mathbf{y} _ {t}|\mathbf{x}_{t}, \mathbf{h} _ {t-1}) \propto \operatorname{e} ^ { - (\mathbf{y} _ {t}-\mathbf{o} _ {t}) ^ { 2 }} \, \text{where} \, \mathbf{o} _ {t} = \operatorname{O}\left(\operatorname{H}\left(\mathbf{x}_{t}, \mathbf{h} _ {t-1}\right)\right)
\end{equation}

\noindent Then, utilizing the local Markov assumption of the RNNs \cite{DBLP:conf/nips/BengioDV00}, i.e., $(\mathbf{o}_{t} \perp \mathbf{o}_{t-1}, \mathbf{h}_{t-2} \, | \, \mathbf{h}_{t-1})$, the joint probability of the whole popularity sequence $\mathbf{y}$ can be factorized into the product of per timestep conditional probabilities (Eq. 
\ref{EQ:TEMP_GAUSS}). Therefore, the following equation holds:

\begin{equation}
\label{EQ:DEC_PROB}
\log q(\mathbf{y} | \mathbf{z}) = \sum  \log q(\mathbf{y} _ {t}|\mathbf{x}_{t}, \mathbf{h} _ {t-1}), \; \text{where} \; \mathbf{h} _ {0} = \operatorname {func}(\mathbf{z})
\end{equation}

\noindent Combining the Eq. \ref{EQ:TEMP_GAUSS} and \ref{EQ:DEC_PROB}, the $\log q(\mathbf{y} | \mathbf{z})$ can be expressed with the following form:

\begin{equation}
\log q(\mathbf{y} | \mathbf{z}) = \sum - (\mathbf{y} _ {t}-\mathbf{o} _ {t}) ^ {2} + C
\end{equation}

\noindent and the maximization of which is equivalent to the minimization of the sum of the per-step MSE losses.

\subsubsection{Summary of the MMVED framework}
By now, each component of the MMVED framework is fully specified, and we distinguish the MMVED for classification, regression, and temporal regression with MMVED-CLS, MMVED-REG, MMVED-TMP for the rest of the paper. Figure \ref{FIG:MODEL} schematically illustrates the structures of the  proposed MMVED framework.

\subsection{Monte Carlo Gradient Estimator}

The above sections describe in detail each component to compute the Eq. \ref{EQ:IBLBO}. However, the optimization of Eq. \ref{EQ:IBLBO} w.r.t the encoder $p(\mathbf{z}|\mathbf{m})$ and decoder $q(\mathbf{y}|\mathbf{z})$ is not trivial, since their gradients is in essence the gradients with probabilistic distributions, which precludes us from calculating them analytically. As a result, Monte Carlo (MC) methods are introduced to form unbiased estimators for the gradients. 

We assume that the encoder and the decoder network is parameterized by $\boldsymbol{\phi}$ and $\boldsymbol{\theta}$ respectively. Since the KL term in the IBLBO can be calculated analytically, the only gradient needs to be computed by sampling is the expected log likelihood term $\operatorname{
ELL} \triangleq  \mathbb { E } _ {p(\mathbf{z}|\mathbf{m})} \left[\log q (\mathbf{y}|\mathbf{z})\right]$. Observing that  the $\log q (\mathbf{y}|\mathbf{z})$ term is explicit in expectation form, its gradient w.r.t the ELL can be easily estimated with MC by averaging the gradients computed from samples taken from the encoder distribution:

\begin{equation}
\label{EQ:DEC_GRAD_EST}
\nabla _ {\boldsymbol{\theta}} \operatorname{
ELL} \simeq \frac { 1 } { L } \sum _ { l = 1 } ^ { L } \nabla _ {\boldsymbol{\theta}} \left( \log q _ { \boldsymbol{\theta}} \left( \mathbf{y}  | \mathbf{\mathring{z}}  ^ { (l) }\right) \right)
\end{equation}

\noindent where $\simeq$ symbol means that the RHS is an unbiased estimator for the LHS, and $\mathbf{\mathring{z}}  ^ { (l) }$ is the $l$th sample draw from the encoder distribution $p(\mathbf{z}|\mathbf{m})$. The unbiased estimator for the gradients w.r.t $\boldsymbol{\phi}$ is more difficult to obtain the since there is no obvious way to explicitly reform it into an expectation form. However, with reparameterization trick \cite{DBLP:journals/corr/KingmaW13}, samples from $p(\mathbf{z}|\mathbf{m})$ can be reformed by a bivariate transformation from a deterministic part and a stochastic noise of the following form:

\begin{equation}
\label{EQ:REP_TRICK}
\mathbf{\mathring{z}}\left([\boldsymbol{\mu}, \boldsymbol{\sigma}], \mathring{\boldsymbol{\epsilon}}\right) = \boldsymbol{\mu} + \boldsymbol{\sigma} \odot \mathring{\boldsymbol{\epsilon}}
\end{equation}

\noindent where $\mathring{\boldsymbol{\epsilon}}$ is a random vector the same size as $\mathbf{z}$ drawn from the standard normal distribution. Eq. \ref{EQ:REP_TRICK} means that the gradients of ELL w.r.t $\boldsymbol{\phi}$ can be estimated by drawing samples from the noise distribution taking and average:

\begin{equation}
\label{EQ:ENC_GRAD_EST}
\nabla _ {\boldsymbol{\phi}} \operatorname{
ELL} \simeq \frac { 1 } { L } \sum _ { l = 1 } ^ { L } \nabla _ {\mathbf{\mathring{z}} ^ {(l)}} \operatorname{ELL} \cdot \nabla _ {[\boldsymbol{\mu}, \boldsymbol{\sigma}]} \mathring{\mathbf{z}} ^ {(l)} \left([\boldsymbol{\mu}, \boldsymbol{\sigma}],  \mathring{\boldsymbol{\epsilon}} ^ {(l)}\right) \cdot \nabla _ {\boldsymbol{\phi}} {[\boldsymbol{\mu}, \boldsymbol{\sigma}]}
\end{equation}

As was discussed in \cite{gal2016uncertainty}, the variance of the reparameterization trick is low enough such that one sample is enough for the training to converge. Thus, the $L$ in both Eq. \ref{EQ:ENC_GRAD_EST} and Eq. \ref{EQ:DEC_GRAD_EST} is set to one. \textbf{Eq. \ref{EQ:REP_TRICK} can be viewed alternatively from the denoising perspective. It shows that during training, the hidden representations are injected with self-adaptive random Gaussian noises controlled by the standard deviation $\boldsymbol{\sigma}$, which was shown by previous work to have robustness towards noise in the test phase compared to adding a deterministic noise in feature space \cite{DBLP:conf/kdd/LiS17}, and provides a systematic way to model the latent noise generation mechanism incurred by the low-quality of micro-videos}. The specific training steps of our proposed MMVED is summarized in Algorithm \ref{alg:MMVED-SGD}.

\begin{algorithm}
    \let\OldRequire\algorithmirequire
	\renewcommand{\algorithmicrequire}{\textbf{Input:}} 
	\renewcommand{\algorithmicensure}{\textbf{Output:}}
	\caption{ MMVED-SGD: Training MMVED with SGD.}
	\label{alg:MMVED-SGD} 
	\begin{algorithmic}[1]
		\REQUIRE A micro-video popularity prediction dataset $\mathcal{ D }$; each sample is associated with $K$ modalities $\mathbf{m}$, and the corresponding popularity representation $\mathbf{y}$. \\
		
		\STATE Select proper decoder structure according to the task.
		\STATE Randomly initialize $\boldsymbol{\theta}$, $\boldsymbol{\phi}$.
		
		\WHILE{not converge}
		\STATE Randomly sample a batch $\mathcal{ \hat{ D } }$ from $\mathcal{ D }$.

		\FOR{$i = 1 \to K$}
		\STATE Compute $\boldsymbol{\mu} _ {i}$, $\boldsymbol{\sigma} _ {i}$ as Eq. \ref{EQ:MOD_SPE_GAUSSIAN}.
		\ENDFOR
		\STATE Compute $\boldsymbol{\mu}$, $\boldsymbol{\sigma}$ of $p(\mathbf{z}|\mathbf{m})$ from each $\boldsymbol{\mu} _ {i}$, $\boldsymbol{\sigma} _ {i}$ as Eq. \ref{EQ:WHOLE_GAUSSIAN}, and add its KL-divergence with prior to the loss.
		\STATE Sample $\epsilon$ from $\mathbf{z}$ and compute $z$ with reparameterization trick as Eq. \ref{EQ:REP_TRICK}.
		\STATE Compute the $q(\mathbf{y}|\mathbf{z})$ as Eq. \ref{EQ:DEC_PROB}, and add prediction loss.
		\STATE Compute gradient of loss w.r.t $\boldsymbol{\theta}$, $\boldsymbol{\phi}$ with Eq. \ref{EQ:DEC_GRAD_EST}, \ref{EQ:ENC_GRAD_EST}.
		\STATE Update $\boldsymbol{\theta}$, $\boldsymbol{\phi}$
		\ENDWHILE
		\RETURN $\boldsymbol{\theta}$, $\boldsymbol{\phi}$
		
		\ENSURE MMVED model trained on dataset $\mathcal{D}$
	\end{algorithmic} 
\end{algorithm}

\subsection{Predictive Behaviors of MMVED}

Although MMVED appears a certain amount of randomness to deal with the interference of uncertainty in popularity during the training phase, its behavior in validation and prediction is designed to be deterministic, such that the results of different rounds of prediction for the same micro-video are consistent. Since the mean of the Gaussian variable carries the information whereas the standard deviation preserves the uncertainty, after the training of MMVED, for predicting the popularity of a newly released micro-video, the mean of the Gaussian embedding is kept as its fixed representation. Then, the extracted representation is fed into the decoder to make the prediction of the popularity.

\section{Feature Engineering}

\label{SEC:FEATURE}

In this section, we discuss the form of $\mathbf{m}$, i.e., the multimodal representation of the micro-video. Previous work has confirmed that whether a micro-video comes into fashion after its release is closely related to its visual-aural contents, its attached descriptions, and the profile of its publisher \cite{chen2016micro}. Thus, features are extracted from four different aspects, namely, visual, acoustic, textual, and social, as the multimodal characterization of the micro-video.

\subsection{Visual Modality}

In order to describe the visual contents of the micro-video, we utilize the state-of-the-art convolutional neural network (CNN). Specifically, we keep the convolutional base of the ResNet50 \cite{DBLP:conf/cvpr/HeZRS16} pre-trained on ImageNet \cite{DBLP:journals/ijcv/RussakovskyDSKS15} as the fixed feature extractor, which has achieved great success in various computer vision tasks such as image classification, object detection, and action recognition. 

As is shown in \cite{DBLP:conf/eccv/ZeilerF14}, the last layers of CNNs pre-trained on ImageNet encode the information with regard to the existence of certain objects and their relationships, which we believe could be of benefit to the popularity prediction of micro-videos, since certain objects, such as delicious food and cute pets are naturally more attractive than others. The detailed feature extraction steps are described as follows: for each micro-video, we first extract its keyframes with the FFmpeg toolkit\footnote{\url{http://ffmpeg.org/.}}. Then, for each keyframe, a 2,048-D activation is obtained from the global average pooling layer of the pre-trained ResNet. In the end, the extracted activations are temporally averaged to get a fixed-length representation for the whole micro-video.

\subsection{Acoustic Modality}

The audio modality of a micro-video usually includes two key types of information that could be helpful to predict its popularity: the original audio track that provides complementary information with the visual elements, such as the tone and speech of the protagonists, and the accompanied background music, which shows a strong hint for the affective states of the uploader. Similarly, we use the latest CNN that is specifically designed for audio spectrogram to extract the aural features, which has shown promising results in micro-video recommendation \cite{wei2019mmgcn}, hashtag recommendation \cite{wei2019personalized} and video affective content analysis \cite{zhu2019multimodal}.

In particular, for each micro-video with $N_{v}$ keyframes, we first split the audio track of length $N_{a}$ into $N_{v}$ non-overlapping one-second windows, with the center of the $i$th audio window $i _ {a}$ computed as follows:

\begin{equation}
i _ {a} = \frac{N _ {a} - invSR}{N _ {v}} i  + 0.5 invSR 
\end{equation}

\noindent where $invSR$ is the inverse of the audio sample rate. Under such cases, the center of each audio window precisely matches the position of the corresponding keyframe. Then, the spectrogram for each audio window is computed with code open-sourced by \cite{Hershey2016CNN}, and is fed into VGGish \cite{Hershey2016CNN} pre-trained on AudioSet \cite{gemmeke2017audio} to extract a 128-dimensional deep feature. The features for the same micro-video are globally pooled, such that video-level aural features are obtained.

\subsection{Textual Modality}

Besides, micro-videos are usually accompanied with textual descriptions, such as the titles, remarks, or some hashtags, to summarize the important content and emotional feelings the uploaders wish to convey, or accentuate the salient characteristics of the micro-video. Thus, the attached texts provide new aspects to support the popularity prediction. For example, a title containing "funny moments of Husky!" suggests that the micro-video is intended to resonate with a large population of pet lovers, whereas the hashtags like "AdvancedTheoryDiscovery" indicate the post videos are specially made for a small group. 

For the majority of the languages that are used worldwide, mature natural language processing (NLP) toolkits are usually developed and made public by the NLP specialists. For example, the FudanNLP\footnote{\url{https://github.com/FudanNLP/nlp-beginner/.}} \cite{DBLP:conf/acl/QiuZH13} toolkit, which is one of the most commonly referred arsenal for Chinese natural language processing, can make classification of all the texts based on models pre-trained on large corpora, and extract a 20-D textual feature vector for each sentence. As for English, the sentence2vec\footnote{\url{https://github.com/klb3713/sentence2vec/.}} \cite{le2014distributed} is able to embed English sentences of variable-length into a hidden space based on their semantic similarities. Besides, the Stanford CoreNLP toolkits\footnote{\url{http://stanfordnlp.github.io/CoreNLP/.}} \cite{manning-EtAl} is equipped with a broad range of text analysis packages, one of which is able to classify the input texts into five sentiment classes with pre-trained sentiment TreeBank model \cite{socher2013recursive}. Both toolkits were shown to be handy for extracting reliable textual features that can support the popularity prediction of the micro-video \cite{chen2016micro, DBLP:journals/tkde/JingSNBLW18}. 

\subsection{Social Modality}

In addition to the features that are directly linked to the micro-video content, the profile of its publisher could also provide key information for predicting its potential popularity. For instance, uploaders who have more followers or have their accounts verified may have more influence, and their productions tend to attract more attention among viewers, in comparison with the common users. 

As different micro-video sharing platforms unveil different attributes of the uploaders to the public, features from the social modality may vary from platform to platform. Nonetheless, we summarize some useful and universal social characteristics specific to the publisher: the follower-followee counts, the total post counts, the loop counts, the verification status, etc., which mainly portray the degree of influence of the uploader.

\subsection{Postprocessing}

Finally, all features extracted in four modalities are standardized into zero-mean and unit variance to eliminate the scale bias and stabilize the training of MMVED network.

\section{Experiments on the Public NUS datasets}
\label{SEC:NUS}

\subsection{Dataset and Implementation Details}

We first focus on the micro-video popularity regression task with MMVED-REG model on NUS dataset. The NUS micro-video popularity regression dataset\footnote{\url{http://acmmm2016.wixsite.com/micro-videos/.}} is established and released by researchers from the Lab for Media Search (LMS) at the National University of Singapore. The dataset contains 303,242 micro-videos collected between July 2015 and October 2015 from a then widely-used micro-video sharing platform Vine\footnote{\url{https://vine.co/.}}, most of which last 6-8 seconds. The stabilized value of the comments, reposts, likes, and loops number, are recorded and averaged to formulate the sole popularity score of a micro-video. Unfortunately, at the time of our experiments, a proportion of the links to micro-videos in the NUS dataset were invalid, and we could successfully download only 186,637 of them. Therefore, for a fair comparison, we keep the same number of test samples with previous papers (303,24, 10\% of 303,242) \cite{chen2016micro, DBLP:journals/tkde/JingSNBLW18}, and put aside another 303,24 samples for validation. The models are tested over five random splits of the dataset, and the averaged prediction performances are reported.

In our implementation of MMVED-REG model, for visual, acoustic, textual, and social modality, the number of units for the hidden layers of the modality-specific MLP encoder is empirically set to 32, 8+8 (for mean and logstd respectively).  We use Adam \cite{DBLP:journals/corr/KingmaB14} as the SGD optimizer, with learning rate initialized at $5e ^ {-4}$ and linearly dropped to  $5e ^ {-5}$ at the end. Training stops after 50 epochs. As the regularization coefficient $\lambda$ is very important to the performance of our framework, we will first discuss its impact in section \ref{SEC:PA_REG}, and then set it fixed to the optimal empirical value based on the experimental results in sections after.

\subsection{Model Evaluation}

\subsubsection{Evaluation Metric}

For our method to be comparable with previous work, we follow the usage of normalized mean squared error (nMSE) first proposed in \cite{nie2015beyond} to measure the performance of our model. The nMSE metric is defined as follows:

\begin{equation}
\operatorname{nMSE}= \frac{1}{N \sigma ^{2}} \sum_{i=1}^{N} (y _{i} - \hat{y} _ {i}) ^ {2}
\end{equation}

\noindent where $y _ {i}$ and $\hat{y} _ {i} \in \mathbb{R}$ are the real and predicted popularity score for the $i$th micro-video sample, and $\sigma$ is the standard deviation of the popularity groundtruth. Intuitively, the nMSE re-scales the normal MSE metric with the groundtruth variation, which eliminates the bias incurred by varied groundtruth variance for different dataset splits.

\subsubsection{Parameter Analysis}
\label{SEC:PA_REG}

As is aforementioned, the role $\lambda$ plays can be viewed from two aspects: First, Eq. \ref{EQ:UCOBJ} shows that $\lambda$ controls the penalty for encoding extra unit of information of the input modalities $\mathbf{m}$ into the hidden representation $\mathbf{z}$; Second, Eq. \ref{EQ:IBLBO} shows that $\lambda$ is the weight for the KL term and thus controls the penalty for deviation of the encoder distribution from the standard Gaussian prior. Both views can be unified in the sense that the closer the encoder distribution is to the uninformative prior, the less information in $\mathbf{m}$ would be left in $\mathbf{z}$.

In order to find the optimal value for $\lambda$, we train our MMVED-REG with different settings of $\lambda$, record their training dynamics, and evaluate their performance on the test set. The results are illustrated in Figure \ref{FIG:SENSI_ANA_REG}. As we can find out from Figure \ref{FIG:SENSI_ANA_REG},  generally, the test performance increases first and then drops with the increment of $\lambda$. Such phenomenon could be explained by the fact that when the value of $\lambda$ is too small, the information bottleneck constraint set on the Gaussian representation is relaxed, and thus information flows unbridledly from the noisy input modalities to the hidden representations, which makes the training unstable, oscillating and hard to converge (the blue curve). On the other hand, when the value of $\lambda$ grows too large, excessive constraint is set upon the hidden embedding, where useful information for popularity prediction is blocked out as well as the noisy ones; therefore, the multimodal encoder is unable to learn a good representation from the micro-video content, which leads the model to converge to a suboptima (the red curve) compared to the model trained with a suitable $\lambda$ (the green curve). 

Besides, we notice an interesting phenomenon that for our model, making insufficient information constraint ($\lambda=0.1$) is worse than imposing oversized information constraint ($\lambda=1.1$). Such results could reveal that the information-rich but low-quality nature of micro-videos renders the extracted multimodal features low in  signal-to-noise ratio, which not only interferes with but sabotages the training process and generalization ability of the popularity prediction model. Therefore, based on the analysis above, we fix $\lambda$ of the MMVED-REG to its empirical optimal value 0.7, when we draw the comparison with other SoTA methods on the NUS dataset in the next section.

\begin{figure}
\subfigure[nMSE]{
\begin{minipage}[t]{0.45\linewidth}
\centering
\includegraphics[scale=0.4]{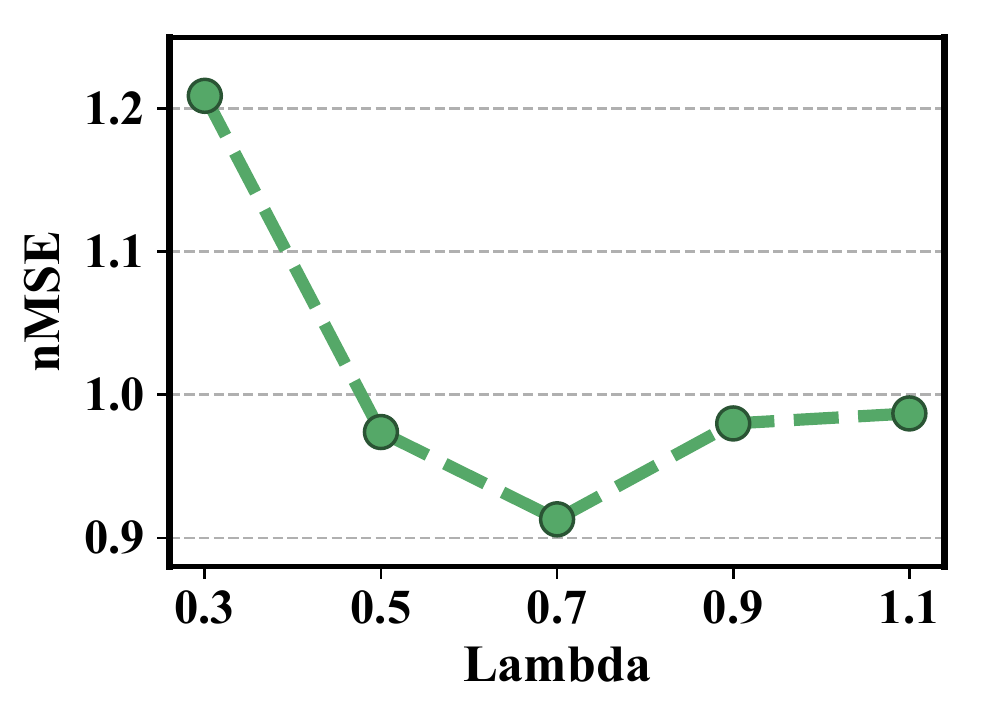}
\end{minipage}
}
\subfigure[Training dynamic]{
\begin{minipage}[t]{0.45\linewidth}
\centering
\includegraphics[scale=0.4]{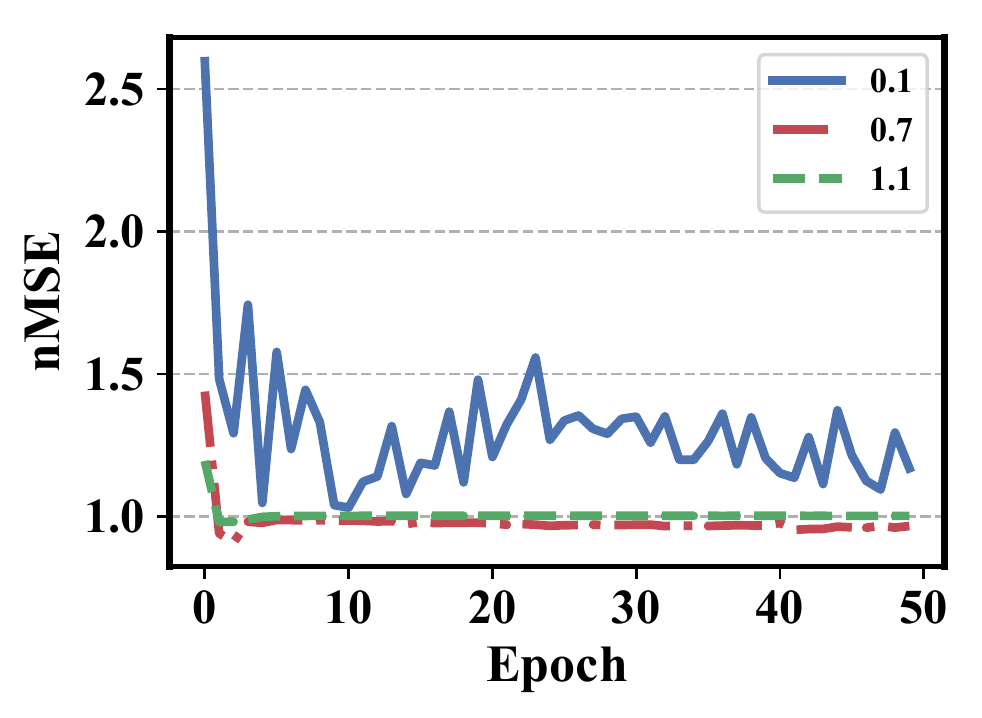}
\end{minipage}
}
\caption{The influence of $\lambda$ to model performance and training dynamic}
\label{FIG:SENSI_ANA_REG}
\end{figure}

\subsubsection{Comparison with the State-of-the-Arts}

The SoTA methods that are selected for the comparison with the proposed MMVED-REG model are listed below:

\begin{itemize}
\item
    \textbf{ELM}. The extreme learning machine \cite{huang2006extreme}, \cite{huang2011extreme} is a widely used regression model that is comprised of a single hidden layer neural network. The ELM is shown to be able to learn the embedding of various types of feature while having high generalization and low computational costs.
    
\item
    \textbf{TMALL}. The transductive multi-modal learning model \cite{chen2016micro} regresses the popularity of micro-videos through learning embeddings of micro-videos where the hidden space is constrained to be consistent with the multimodal feature spaces, such that the semantic characteristics of micro-videos are preserved.
    
\item
    \textbf{TRLMVR}. The transductive low-rank multi-view regression model \cite{DBLP:journals/tkde/JingSNBLW18} is an extension of TMALL, where a novel low-rank constraint is set upon the learned micro-video embeddings such that only principal components of the feature space are allowed to be kept in the representations.
\end{itemize}

\begin{table}[]
\centering
\caption{Comparison of performance between the proposed MMVED-REG model and several SoTA methods.}
\label{TAB:NUS_COM_BASE}
\begin{tabular}{cc}
\toprule
Methods & nMSE\\ 
\midrule
ELM \cite{huang2011extreme}                   & 0.982   \\
TMALL \cite{chen2016micro}                    & 0.979   \\
TLRMVR \cite{DBLP:journals/tkde/JingSNBLW18}  & 0.934   \\
\midrule
The proposed MMVED-REG       & \textbf{0.914} \\ 
\bottomrule
\end{tabular}
\begin{tablenotes}
 \item[1] The best results are in \textbf{bold}. 
\end{tablenotes}
\end{table}

Table \ref{TAB:NUS_COM_BASE} reports the performances of the proposed MMVED-REG and the SoTA algorithms. From Table \ref{TAB:NUS_COM_BASE}, we first notice that compared to ELM where features from different modalities are simply concatenated, TMALL utilizes a multi-view method to fuse heterogeneous features from four modalities subject to a consistency constraint, and it achieves a better result than ELM. TRLMVR further improves TMALL by adding a low-rank constraint of hidden space to the multi-view learning objective such that the insignificant components of the feature spaces are removed, which leads to substantial improvements compared to TMALL. However, in TRLMVR, the low-rank constraint is hardwired and universal for all samples, such that samples with different noise levels could not be properly distinguished. Besides, their hidden representation of TRLMVR is inherently deterministic. On the other hand, in our MMVED-REG model, we first preserve the uncertainty of popularity mapping by modeling the hidden representation as a Gaussian variable. In addition, for each micro-video, the mean (semantic part) and std (uncertainty and noise part) of its Gaussian embedding are independently inferred through the PoE inference network. Furthermore, an information bottleneck constraint is set upon the Gaussian embeddings, such that the relevant information obscured by the information-rich but noisy micro-video contents can be actively learned to be extracted into the hidden representation with the guidance of the training popularity groundtruth. Therefore, our method achieves the best results among the SoTA methods. 

\section{Experiments on Crawled Xigua Dataset}
\label{SEC:XIGUA}

\subsection{Dataset and Implementation Details}

Next, our discussion shifts towards the temporal popularity prediction of micro-videos on Xigua dataset. The Xigua micro-video dataset is established based on a famous micro-video sharing platform Xigua\footnote{\url{https://www.ixigua.com/}} in China, whose distinguishing characteristic is that the increments of loops for each micro-video are contiguously recorded every 15 minutes immediately after its release for 3 days, which makes the popularity groundtruth a sequence. The detailed crawling strategy and dataset description can be referred to in \cite{MMVED-WWW2020}. Xigua dataset includes 3,231,072 records from 11,219 micro-videos posted by 2664 users between July 24th and August 14th, 2019. In Xigua dataset, the ResNet visual feature is reduced to 128D by PCA to prevent overfitting, and the followers/followees counts of the uploader together with the verification status are recorded as the 3D social feature.

Since predicting popularity every 15-minute is too fine in granularity for practical use, the popularity sequences are re-sampled such that the recording interval $T_{int}$ is equivalent to 8 hours for the main part of the experiments. Nonetheless, the influence of $T_{int}$ will be thoroughly discussed in section \ref{SEC:LENGTH}. The reported performances, unless other-wisely specified, are the averaged results over five splits, each of which randomly selects 64\% of the micro-videos for training, 16\% for validation, and 20\% for testing. For the implementation of MMVED-TMP, the encoder structure, and the training strategy remain similar to MMVED-REG. Besides, the dimension of the hidden units for the variational RNN decoder is set to 8 empirically.

\subsection{Model Evaluation}

\begin{figure}
\subfigure[nMSE-TMP]{
\begin{minipage}[t]{0.45\linewidth}
\centering
\includegraphics[width=0.98\linewidth]{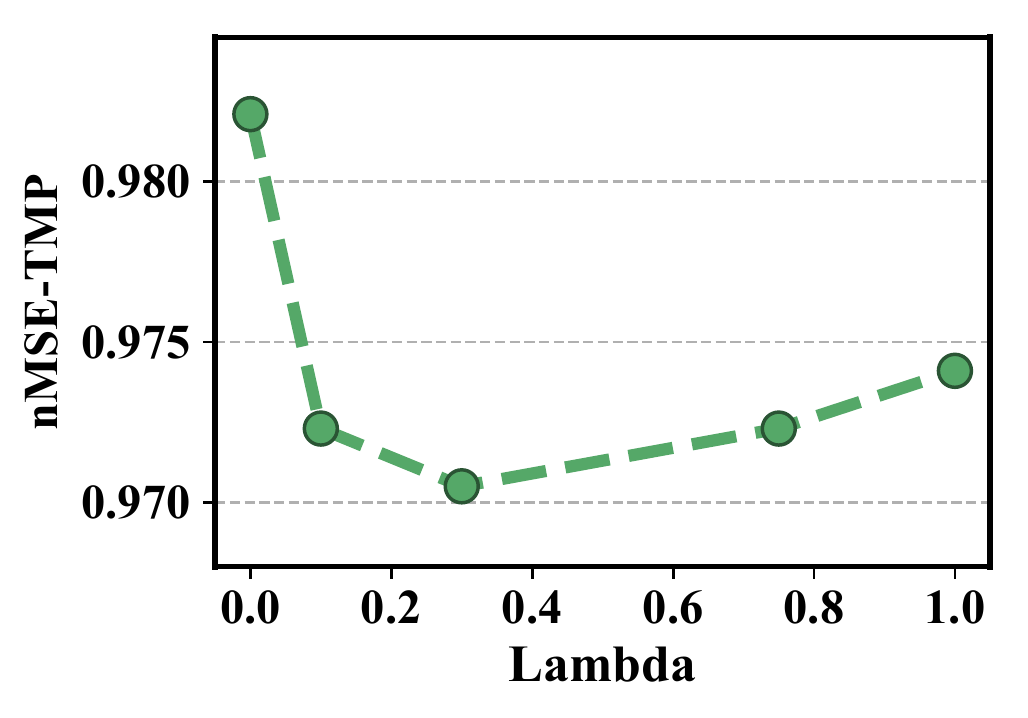}
\end{minipage}
}
\subfigure[SRC]{
\begin{minipage}[t]{0.45\linewidth}
\centering
\includegraphics[width=0.98\linewidth]{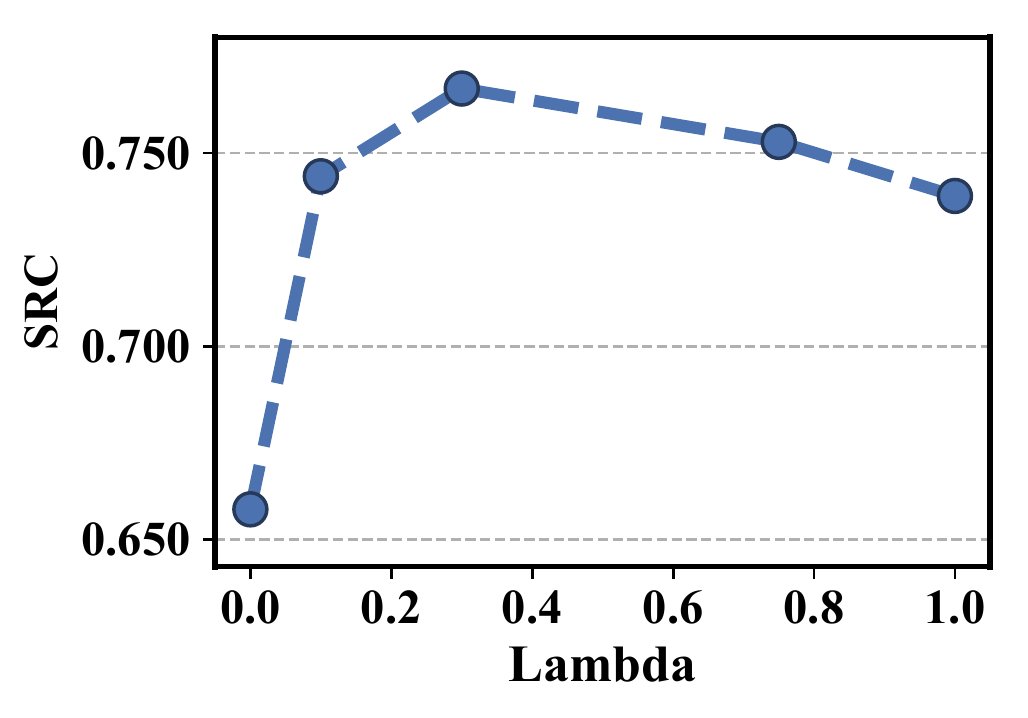}
\end{minipage}
}
\caption{Influence of $\lambda$ to  performance of proposed MMVED-TMP model under the nMSE and SRC metrics.}
\label{FIG:SENSI_ANA}
\end{figure}

\begin{figure}
\subfigure[nMSE-TMP]{
\begin{minipage}[t]{0.45\linewidth}
\centering
\includegraphics[width=0.98\linewidth]{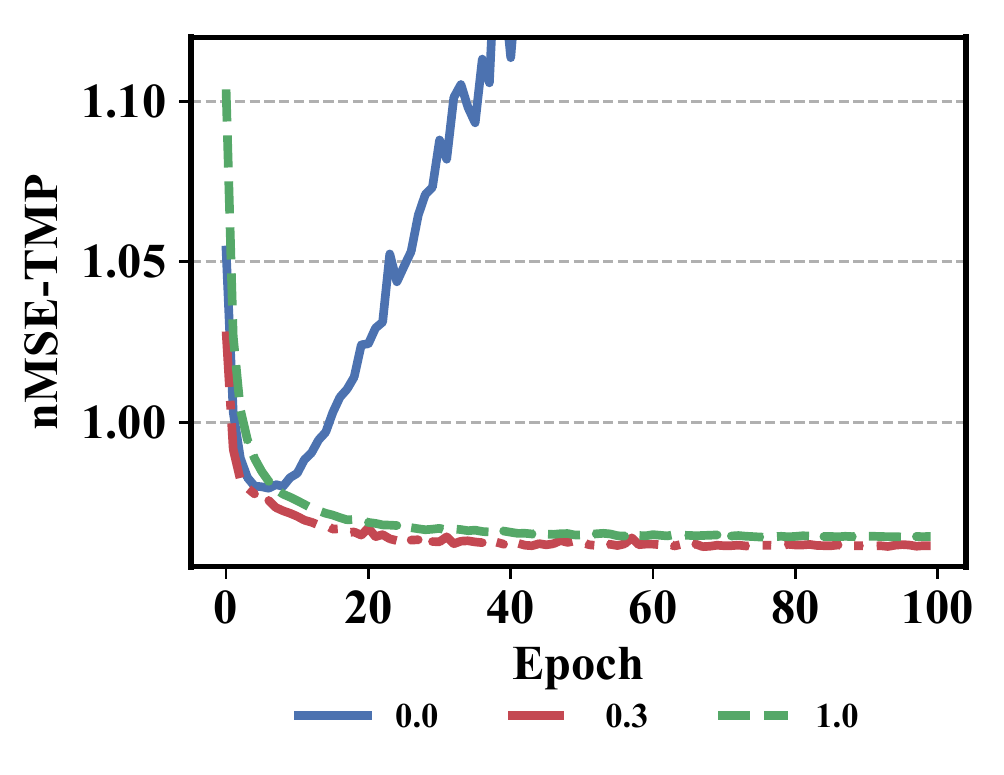}
\end{minipage}
}
\subfigure[SRC]{
\begin{minipage}[t]{0.45\linewidth}
\centering
\includegraphics[width=0.98\linewidth]{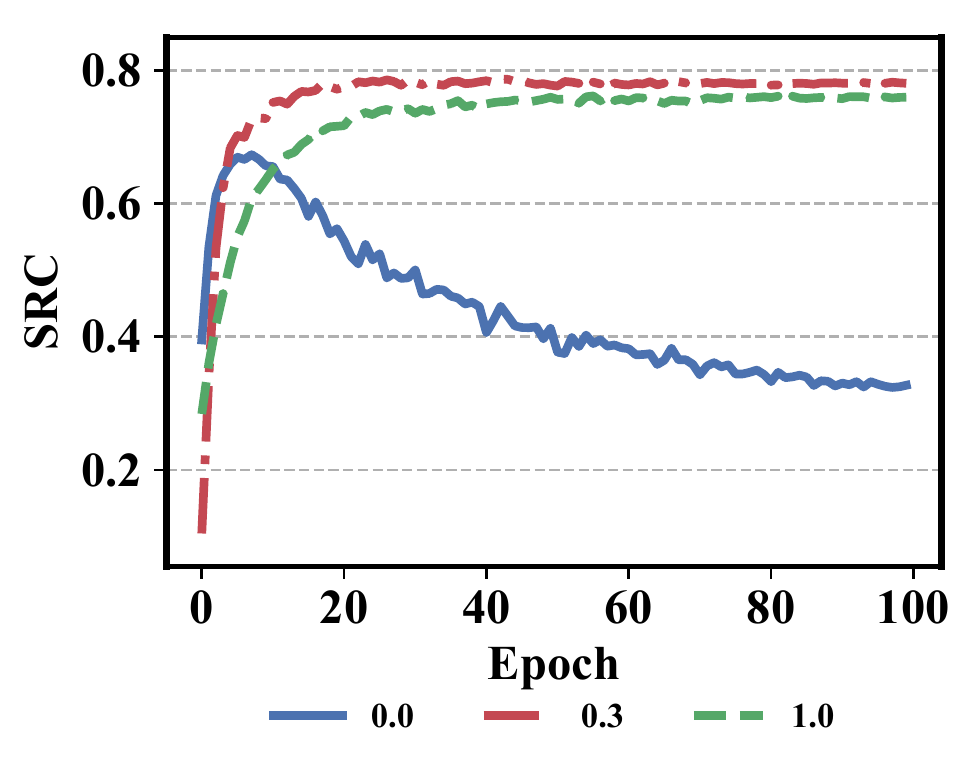}
\end{minipage}
}
\caption{Influence of $\lambda$ to training dynamics of proposed MMVED-TMP model under the nMSE and SRC metrics.}
\label{FIG:VAL_PER}
\end{figure}

\subsubsection{Evaluation Metrics}

Two metrics are utilized in our work to evaluate the performance of different temporal popularity prediction  methods. The first is the temporal normalized Mean Squared Error (nMSE-TMP), which is a variant of nMSE that is suitable for the measure of closeness between two sequences. The nMSE-TMP metric utilized in our paper is defined as:

\begin{equation}
\operatorname{nMSE-TMP}= \frac{1}{N\sigma _ {\mathbf{y}} ^ {2}} \sum_{i=1}^{N} \frac{1}{T} \sum_{j=1}^{T} (\mathbf{y} _{i} ^{j} - \hat{\mathbf{y}}_{i}^{j})
\end{equation}

\noindent where $\mathbf{y} _ {i}$ and $\hat{\mathbf{y}} _ {i} \in \mathbb{R}^{T}$ are the real and predicted popularity sequence for the $i$th micro-video sample in the dataset, and $\sigma$ stands for the standard deviation operator. The nMSE-TMP index rescales the mean squared error of predictions by the groundtruth variance, which alleviates the bias due to the difference of fluctuation among popularity sequences and the difference of variation among the dataset splits.

Besides, observing that even two sequences close enough measured by the nMSE-TMP could have the opposite trend, we adopt the Spearman’s rank correlation (SRC) metric \cite{DBLP:conf/www/KhoslaSH14} as a complement, which is defined as follows:

\begin{equation}
\operatorname{SRC}=\frac{1}{N} \sum _ {i=1} ^ {N} \frac{1}{T} \sum_{j=1}^{T}\left(\frac{\mathbf{y}_{i} ^ {j} -\overline{\mathbf{y}}_{i}}{\sigma_{\mathbf{y} _ {i}}}\right)\left(\frac{\hat{\mathbf{y}}_{i} ^ {j}-\overline{\hat{\mathbf{y}}}_{i}}{\sigma_{\hat{\mathbf{y}} _ {i}}}\right)
\end{equation}

\noindent Generally, SRC indicates the trend consistency of the prediction, which is complementary to the nMSE-TMP metric since the latter one only focuses on the deviation of squared absolute value between $\mathbf{y} _ {i}$ and $\hat{\mathbf{y}} _ {i}$.

\subsubsection{Parameter Analysis}
\label{SEC:PARAM_ANA}

Similarly, we first analyze the impact of $\lambda$ to the performance and training dynamic of MMVED-TMP model under the nMSE and SRC metric. The results are illustrated in Figure \ref{FIG:SENSI_ANA} and \ref{FIG:VAL_PER}. From both figures, we can find out that the relationship between $\lambda$, model performance, and training dynamic is in line with the regression task. Both metrics indicate that the model performance improves first but then worsens with the increment of $\lambda$. Besides, with $\lambda$ set to $0$, where no information constraint is set upon the hidden representation, the validation performance slightly increases in the first few epochs but drops drastically afterwards, which indicates that the model overfits to the noise and irrelevant information specific to the training set, whereas with $\lambda$ set to 0.3 or 1, since the information capacity of the embedding is restricted by the KL divergence constraint, the validation performance improves in a nearly monotonic manner. Moreover, with the $\lambda$ set to the suitable value ($\lambda$ = 0.3), the asymptotic optimal performance of MMVED-TMP is better than those with too large $\lambda$. 

Besides, we also note several differences between MMVED-TMP on the Xigua dataset and MMVED-REG on the NUS dataset when analyzing the sensitivity of $\lambda$. The most salient one is that for MMVED-TMP, the optimal value for $\lambda$ shifts to a smaller value 0.3, which implies that under our model design and experimental settings, less constraint leads to better performance for the temporal popularity regression task. Moreover, for MMVED-TMP model, if no constraint is set upon the hidden representation ($\lambda$=0), although over-fits quickly, which is reflected by the rapid rebound of validation loss, the training loss converges for all splits (while for MMVED-REG model, training fails to converge for some splits). Such phenomena could be explained as follows: First, in \cite{MMVED-WWW2020}, by visualizing the statistical characteristics of the popularity sequence in a stratified manner, we found that the popularity level of micro-videos tend to have similar temporal evolution patterns, reaching the peak shortly after their launch, and then gradually dropping to near-zero and stabilize, as people show preference to fresh micro-videos than the stale and obsolete ones. Thus, the absolute time which is fed into the RNN at each timestep is itself a strong indicator for popularity level, which makes MMVED-TMP less dependent on the latent representation compared to MMVED-REG, for which, on the contrary, the noisy micro-video content is the sole source of information. Therefore, the MMVED-TMP model exhibits more tolerance to the irrelevant information in the hidden representation. Since empirically, setting $\lambda$ to 0.3 reaches the optimal performance for the MMVED-TMP model, we use it as the default value for $\lambda$ in the sections afterwards.

\subsubsection{Comparison with the Baselines}

In order to further prove the effectiveness of the proposed MMVED-TMP model, in this section, we choose two machine learning baselines, i.e., the temporal support vector regressor and the temporal random forest, and two deep learning baselines, i.e., the contextual LSTM, the multimodal deterministic encoder-decoder, to verify the superiority of the proposed MMVED-TMP framework:

\begin{itemize}
\item
    \textbf{Temporal SVR}. The Support Vector Regressor (SVR) is a widely-used kernel method in regression areas. For SVR to be amenable to the temporal regression task, the absolute time is jointly concatenated with the features of micro-videos from four modalities.
    
\item
    \textbf{Temporal RFR}. The Random Forest Regressor (RFR) is another machine learning method based on an ensemble of decision trees. We make the same pre-processing for features as the Temporal SVR to make the RFR suitable for the temporal prediction task.
    
\item
    \textbf{Contextual LSTM}. The Contextual LSTM \cite{DBLP:journals/corr/GhoshVSRDH16} augments the input at each timestep of the LSTM with a global embedding of the object as auxiliary contextual information. In our implementation, we encode the concatenation of features from all modalities with an MLP as the contextual variable.
    
\item
    \textbf{Deterministic Encoder-Decoder}. In order to verify the effectiveness of uncertainty preservation, we design the multimodal deterministic encoder-decoder (MMDED) baseline, where the mean of the PoE Gaussian embedding is taken as a deterministic representation of the input modalities. All other structures remain the same as the MMVED-TMP.
\end{itemize}

\begin{table}[]
\centering
\caption{Comparison of performance between the proposed MMVED-TMP model and several baseline methods.}
\label{TAB:COM_BASE}
\begin{tabular}{ccc}
\toprule
Methods & nMSE-TMP & SRC\\ 
\midrule
Temporal SVR    & 1.018  & 0.247 \\
Temporal RFR    & 1.204  & 0.127 \\
CLSTM           & 0.987  & 0.634 \\
MMDED           & 0.979  & 0.655 \\
\midrule
The proposed MMVED-TMP  & \textbf{0.971}  & \textbf{0.767} \\ 
\bottomrule
\end{tabular}

\begin{tablenotes}
 \item[1] The best results are in bold. 
 \item[2] Small nMSE and large SRC indicate good performance.
\end{tablenotes}

\end{table}

Table \ref{TAB:COM_BASE} summarizes the comparison of performance between
the proposed MMVED-TMP model and the baselines. Three conclusions can be drawn from the Table \ref{TAB:COM_BASE}: First, deep learning-based methods consistently outperform the machine learning-based ones by a large margin, since for the machine learning-based baselines, the prediction of popularity level at each timestep is made independently by taking the absolute time as an auxiliary 1D-feature, and thus no temporal relationship is utilized in these methods, whereas capturing the dynamic pattern of sequence, on the contrary, is the strength of the RNNs. Second, for the encode-decoder based deep learning methods, those that take the hidden encoding as the initial state of the RNN (MMDED, MMVED-TMP) outperform those use hidden encoding to augment the input at each timestep (CLSTM). The reason could be that by only using the embedding micro-video as an initialization for the hidden state of the RNN, the former methods force the encoder to summarize relevant information from the micro-video features with respect to the popularity trend, blocking the undesired shortcut for the network to base the prediction solely on the absolute time and ignore the encoding. Besides, by concatenating the high dimensional micro-video feature with the absolute time, the CLSTM model dilates the 1D absolute time information, which is shown to be the strong indicator for the popularity trend of the micro-videos in the previous section.

Finally, we also confirm that modeling the hidden representation as a stochastic variable (MMVED-TMP) performs better than its deterministic counterpart (MMDED). Two reasons could explain such result: first, the mapping from the feature of micro-video to its popularity trend is non-deterministic as external uncertain factors also influence its popularity level, and encoding the micro-video content into stochastic Gaussian embedding where the uncertainty is preserved in its variance is flexible enough to deal with such randomness. Besides, the MMVED-TMP is able to systematically model the hidden noise generative process by adding a self-adaptive noise to its encoded hidden representation during the training phase, which shows more robustness to noise than the MMDED model when faced with new data.

\subsubsection{Influence of Sampling Granularity}
\label{SEC:LENGTH}

\begin{figure}
\centering
\includegraphics[width=0.85\linewidth]{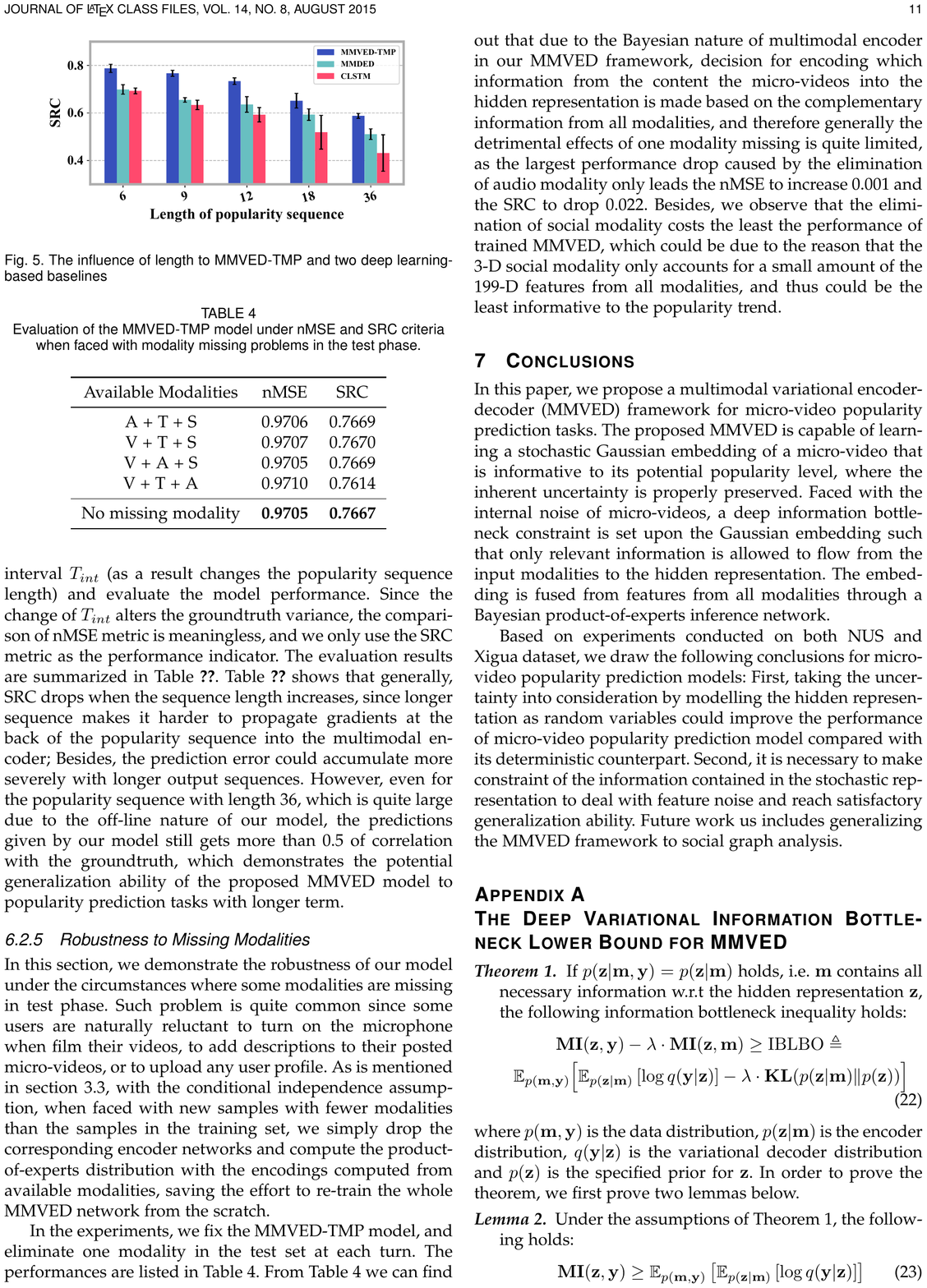}
\caption{The influence of sampling granularity (sequence length) to performance of proposed MMVED-TMP and two RNN-based baselines}
\label{FIG:INFULENCE_LENGTH}
\end{figure}

In order to gain more insight into the effects of the popularity sequence length to the performance of the proposed MMVED-TMP model and the baselines, in this section we test the model performance with varied re-sample interval $T_{int}$ (which, as a result, changes the popularity sequence length). As the change of $T_{int}$ alters the groundtruth variance, which invalidates the comparison of nMSE metric, we only use the SRC metric as the performance indicator. The evaluation results are illustrated in Figure \ref{FIG:INFULENCE_LENGTH}. Figure \ref{FIG:INFULENCE_LENGTH} shows that generally, for all three models, SRC drops as the sequence length increases, since longer sequence makes it harder to propagate gradient information at the back of the popularity sequence to both the temporal RNN and the multimodal encoder; Besides, the prediction error could accumulate more severely with longer output sequences. However, even for popularity sequence of length 36, which is quite large due to the off-line nature of our task, the prediction made by MMVED-TMP still gets more than 0.5 of correlation with the groundtruth and shows the lowest variance of all dataset splits compared with two other baselines, which demonstrates the potential generalization ability of the proposed MMVED-TMP model to popularity prediction tasks with finer granularity and longer term.

\subsubsection{Robustness to Modality Missing in Test Phase}
\label{SEc:ROBUST}
\begin{table}[]
\centering
\caption{Influence of modality missing problems to MMVED-TMP model under nMSE and SRC criteria.}
\label{TAB:DIFF_MOD}
\begin{tabular}{cccc}
\toprule
Available Modalities & nMSE-TMP & SRC\\ 
\midrule
A + T + S    & 0.9706  & 0.7669\\
V + T + S    & 0.9707  & 0.7670\\
V + A + S    & 0.9705  & 0.7669\\
V + T + A    & 0.9710  & 0.7614\\
\midrule
No missing modality     & \textbf{0.9705}  & \textbf{0.7667} \\ 
\bottomrule
\end{tabular}
\end{table}

Finally, in this section, we demonstrate the robustness of our model under the circumstances where some modalities are missing in the test phase. Such a problem is quite common since some users are naturally reluctant to turn on the microphone when film their videos, to add descriptions to their posted micro-videos, or to upload any user profile. As is mentioned in section \ref{SEC:MMENCODER}, with the conditional independence assumption, when faced with new samples with fewer modalities than those in the training set, we simply drop the corresponding encoder networks and re-compute the product-of-experts distribution with the encodings calculated from the available modalities, saving the effort to re-train the whole MMVED framework from scratch. 

In the experiments, we fix the weights of the trained MMVED-TMP model, eliminate one modality with the corresponding encoder network at a time, and report the test performance. The results are listed in Table \ref{TAB:DIFF_MOD}. From Table \ref{TAB:DIFF_MOD}, we can find out that the detrimental effect of missing one modality is quite negligible. Aside from the strong temporal features, such results could also be explained by the PoE nature of the multimodal encoder, where the decision for which information to encode from the micro-video content into the hidden representation is made based on the complementary information from all modalities, weighted by their confidence. We observe that the elimination of social modality costs the performance the most, which is in agreement with \cite{DBLP:journals/tkde/JingSNBLW18}, accentuating the important role the uploader plays for the potential popularity of a micro-video. The elimination of visual and aural modality, on the other hand, shows least effects on the performance, since although features from both modalities have the largest dimension, they are redundant and can be complemented by information from other available modalities, whereas the user attributes from the social modality are compact and comparatively irreplaceable.

\section{Conclusions}
\label{SEC:CONCLUSION}

In this paper, we propose a multimodal variational encoder-decoder (MMVED) framework for micro-video popularity prediction tasks. The proposed MMVED is capable of learning a stochastic Gaussian embedding of a micro-video that is informative to its potential popularity level, where the inherent uncertainty is properly preserved. Besides, faced with the internal noise of micro-videos, a deep information bottleneck constraint is set upon the Gaussian embedding such that only relevant information is allowed to flow from the input modalities to the hidden representations.

Based on experiments on two real-world datasets, we draw the following conclusions: First, explicitly taking the uncertainty into consideration by modeling the hidden representation as random variables could improve the performance of micro-video popularity prediction models compared with their deterministic counterpart. Second, it is necessary to make constraints on the information contained in the stochastic embedding to deal with the noise in micro-video features, such trained models can achieve satisfactory generalization abilities. 

\bibliography{ms}

\newpage
\appendix
\appendixpage

\section{The Deep Variational Information Bottleneck Lower Bound for MMVED}
\label{APP:DIB}

\newtheorem{theorem}{Theorem}
\begin{theorem}{}
\label{THEO_1}
If $p(\mathbf{z} | \mathbf{m}, \mathbf{y})=p(\mathbf{z} | \mathbf{m})$ holds, i.e. $\mathbf{m}$ contains all necessary information w.r.t the hidden representation $\mathbf{z}$, the following information bottleneck inequality holds:
\begin{equation}
\label{EQ:IBLBO_APP}
\begin{aligned}
\mathbf{MI}(\mathbf{z}, \mathbf{y})-\lambda \cdot \mathbf{MI}(\mathbf{z}, \mathbf{m}) \geq \operatorname{IBLBO} & \triangleq \\
\mathbb { E } _ {p (\mathbf{m}, \mathbf{y}) } \Bigl[ \mathbb { E } _ {p(\mathbf{z}|\mathbf{m})} \left[\log q (\mathbf{y}|\mathbf{z})\right] - \lambda \cdot \mathbf { KL }  ( p (\mathbf{z}&|\mathbf{m}) \| p (\mathbf{z}) ) \Bigl]
\end{aligned}
\end{equation}
\end{theorem}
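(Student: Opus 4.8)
The plan is to bound the two mutual-information terms of $R_{IB}$ separately: since the relevance term $\mathbf{MI}(\mathbf{z},\mathbf{y})$ enters with a positive sign I will \emph{lower}-bound it, while the compression term $\mathbf{MI}(\mathbf{z},\mathbf{m})$ enters with the negative weight $-\lambda$ (with $\lambda>0$) so I will \emph{upper}-bound it; adding the two bounds then yields a lower bound on the whole objective. Both bounds come from the same device, namely introducing a tractable variational distribution in place of an intractable true conditional and invoking the non-negativity of the KL divergence. The Markov assumption $p(\mathbf{z}\mid\mathbf{m},\mathbf{y})=p(\mathbf{z}\mid\mathbf{m})$ will be used to rewrite the joint expectations into exactly the nested form $\mathbb{E}_{p(\mathbf{m},\mathbf{y})}\mathbb{E}_{p(\mathbf{z}\mid\mathbf{m})}[\cdot]$ that appears in the IBLBO.

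For the relevance term I would start from the identity $\mathbf{MI}(\mathbf{z},\mathbf{y})=\mathbb{E}_{p(\mathbf{z},\mathbf{y})}[\log p(\mathbf{y}\mid\mathbf{z})]+H(\mathbf{y})$ and replace the intractable posterior $p(\mathbf{y}\mid\mathbf{z})$ by the variational decoder $q(\mathbf{y}\mid\mathbf{z})$. Because $\mathbb{E}_{p(\mathbf{z})}\big[\mathbf{KL}(p(\mathbf{y}\mid\mathbf{z})\,\|\,q(\mathbf{y}\mid\mathbf{z}))\big]\ge 0$, swapping $p$ for $q$ can only decrease the cross term, giving $\mathbf{MI}(\mathbf{z},\mathbf{y})\ge \mathbb{E}_{p(\mathbf{z},\mathbf{y})}[\log q(\mathbf{y}\mid\mathbf{z})]+H(\mathbf{y})$. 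I would then use the Markov factorization $p(\mathbf{m},\mathbf{y},\mathbf{z})=p(\mathbf{m},\mathbf{y})\,p(\mathbf{z}\mid\mathbf{m})$ to identify $\mathbb{E}_{p(\mathbf{z},\mathbf{y})}[\log q(\mathbf{y}\mid\mathbf{z})]$ with $\mathbb{E}_{p(\mathbf{m},\mathbf{y})}\mathbb{E}_{p(\mathbf{z}\mid\mathbf{m})}[\log q(\mathbf{y}\mid\mathbf{z})]$, which is precisely the expected-log-likelihood term of the IBLBO.

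For the compression term I would write $\mathbf{MI}(\mathbf{z},\mathbf{m})=\mathbb{E}_{p(\mathbf{m})}\big[\mathbf{KL}(p(\mathbf{z}\mid\mathbf{m})\,\|\,p(\mathbf{z}))\big]$ and note that the true marginal $p(\mathbf{z})=\int p(\mathbf{z}\mid\mathbf{m})p(\mathbf{m})\,d\mathbf{m}$ is intractable. Replacing it by the fixed uninformative prior and using $\mathbf{KL}(p(\mathbf{z})\,\|\,\text{prior})\ge 0$ — equivalently $\mathbb{E}_{p(\mathbf{z})}[\log p(\mathbf{z})]\ge\mathbb{E}_{p(\mathbf{z})}[\log(\text{prior})]$ — yields the upper bound $\mathbf{MI}(\mathbf{z},\mathbf{m})\le \mathbb{E}_{p(\mathbf{m})}[\mathbf{KL}(p(\mathbf{z}\mid\mathbf{m})\,\|\,p(\mathbf{z}))]$, where $p(\mathbf{z})$ now denotes the prior. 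Because this KL does not depend on $\mathbf{y}$, its expectation under $p(\mathbf{m})$ and under $p(\mathbf{m},\mathbf{y})$ coincide, matching the regularizer in the IBLBO; multiplying by $-\lambda<0$ reverses the inequality as required.

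Adding the two bounds gives $R_{IB}\ge \operatorname{IBLBO}+H(\mathbf{y})$. The main subtlety — the one step I would flag — is the residual entropy $H(\mathbf{y})$: it is a constant independent of the encoder, decoder, and prior, hence irrelevant to the optimization, and it is nonnegative in the discrete (classification) case, so the stated inequality holds outright there; in the continuous-target cases it should be read as an additive constant that can be dropped without affecting the $\arg\max$. Everything else is a mechanical consequence of the two KL-nonnegativity inequalities together with the Markov assumption, so no further obstacle is expected.
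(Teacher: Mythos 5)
Your proposal is correct and follows essentially the same route as the paper's Appendix A proof: Lemma 1 there lower-bounds $\mathbf{MI}(\mathbf{z},\mathbf{y})$ via $\mathbf{KL}(p(\mathbf{y}|\mathbf{z})\,\|\,q(\mathbf{y}|\mathbf{z}))\ge 0$ plus the Markov factorization, and Lemma 2 upper-bounds $\mathbf{MI}(\mathbf{z},\mathbf{m})$ by substituting the fixed prior for the intractable marginal. Your treatment of the residual $H(\mathbf{y})$ term is in fact slightly more careful than the paper's, which simply asserts it is a positive constant and drops it, whereas you correctly note that for continuous targets the differential entropy need not be nonnegative and the bound should then be read modulo an additive constant irrelevant to the optimization.
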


\noindent where $p (\mathbf{m}, \mathbf{y})$ is the data distribution, $p(\mathbf{z}|\mathbf{m})$ is the encoder distribution,  $q (\mathbf{y}|\mathbf{z})$ is the variational decoder distribution and $p (\mathbf{z})$ is the specified prior for $\mathbf{z}$. In order to prove the theorem, we first prove two lemmas below.

\newtheorem{lemma}[theorem]{Lemma}
\begin{lemma}{}
Under the assumptions of Theorem \ref{THEO_1}, the following holds:
\label{LEMMA_FIRST}
\begin{equation}
\mathbf{MI}(\mathbf{z}, \mathbf{y}) \geq \mathbb { E } _ {p (\mathbf{m}, \mathbf{y}) } \left[ \mathbb{E} _ {p(\mathbf{z}|\mathbf{m})} \left[ \log q(\mathbf{y}|\mathbf{z})\right] \right]
\end{equation}
\end{lemma}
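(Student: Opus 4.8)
The plan is to recognize the claimed inequality as the standard variational lower bound on mutual information obtained by replacing the intractable true posterior $p(\mathbf{y}|\mathbf{z})$ with the tractable variational decoder $q(\mathbf{y}|\mathbf{z})$, then accounting for the residual non-negative terms. First I would use the definition in Eq. \ref{EQ:MI} to split the mutual information into a posterior term and a marginal-entropy term:
\begin{equation}
\mathbf{MI}(\mathbf{z}, \mathbf{y}) = \mathbb{E}_{p(\mathbf{z}, \mathbf{y})}\left[\log p(\mathbf{y}|\mathbf{z})\right] - \mathbb{E}_{p(\mathbf{y})}\left[\log p(\mathbf{y})\right],
\end{equation}
where the second term equals the entropy of the popularity target and is therefore non-negative.

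Next I would introduce the variational decoder through the non-negativity of the Kullback--Leibler divergence. For every fixed $\mathbf{z}$ we have $\mathbf{KL}\bigl(p(\mathbf{y}|\mathbf{z}) \,\|\, q(\mathbf{y}|\mathbf{z})\bigr) \geq 0$, which rearranges to $\mathbb{E}_{p(\mathbf{y}|\mathbf{z})}[\log p(\mathbf{y}|\mathbf{z})] \geq \mathbb{E}_{p(\mathbf{y}|\mathbf{z})}[\log q(\mathbf{y}|\mathbf{z})]$. Averaging this over $p(\mathbf{z})$ upgrades it to $\mathbb{E}_{p(\mathbf{z}, \mathbf{y})}[\log p(\mathbf{y}|\mathbf{z})] \geq \mathbb{E}_{p(\mathbf{z}, \mathbf{y})}[\log q(\mathbf{y}|\mathbf{z})]$. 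Substituting into the split above and discarding the non-negative entropy term gives
\begin{equation}
\mathbf{MI}(\mathbf{z}, \mathbf{y}) \geq \mathbb{E}_{p(\mathbf{z}, \mathbf{y})}\left[\log q(\mathbf{y}|\mathbf{z})\right].
\end{equation}

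Finally I would rewrite the expectation over the joint $p(\mathbf{z}, \mathbf{y})$ so that it is taken against the data distribution and the encoder, which is exactly where the hypothesis $p(\mathbf{z}|\mathbf{m}, \mathbf{y}) = p(\mathbf{z}|\mathbf{m})$ of Theorem \ref{THEO_1} (the Markov chain $\mathbf{y} \to \mathbf{m} \to \mathbf{z}$) enters. Marginalizing out $\mathbf{m}$ gives $p(\mathbf{z}, \mathbf{y}) = \int p(\mathbf{m}, \mathbf{y})\, p(\mathbf{z}|\mathbf{m}, \mathbf{y})\, d\mathbf{m} = \int p(\mathbf{m}, \mathbf{y})\, p(\mathbf{z}|\mathbf{m})\, d\mathbf{m}$, so that $\mathbb{E}_{p(\mathbf{z}, \mathbf{y})}[\log q(\mathbf{y}|\mathbf{z})] = \mathbb{E}_{p(\mathbf{m}, \mathbf{y})}\bigl[\mathbb{E}_{p(\mathbf{z}|\mathbf{m})}[\log q(\mathbf{y}|\mathbf{z})]\bigr]$, which is the claimed bound. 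The computation is otherwise routine; the only points that need care are the bookkeeping of which conditional is the ``true'' posterior and which is the variational one, so that the KL inequality points in the direction producing a lower bound rather than an upper bound, and the honest use of the Markov hypothesis to swap the generating law of $\mathbf{z}$ from $p(\mathbf{z}|\mathbf{m}, \mathbf{y})$ to the encoder $p(\mathbf{z}|\mathbf{m})$ --- without that hypothesis the right-hand side would not collapse to a quantity estimable from the data distribution and the encoder alone, and the lemma would fail.
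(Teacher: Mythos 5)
Your proposal is correct and follows essentially the same route as the paper's proof of Lemma \ref{LEMMA_FIRST}: split $\mathbf{MI}(\mathbf{z},\mathbf{y})$ into a conditional log-likelihood term plus the entropy $\mathbf{H}(\mathbf{y})$, lower-bound the former via $\mathbf{KL}\bigl(p(\mathbf{y}|\mathbf{z})\,\|\,q(\mathbf{y}|\mathbf{z})\bigr)\geq 0$, drop the entropy term, and use the Markov hypothesis $p(\mathbf{z}|\mathbf{m},\mathbf{y})=p(\mathbf{z}|\mathbf{m})$ to rewrite $p(\mathbf{y},\mathbf{z})=\int p(\mathbf{m},\mathbf{y})p(\mathbf{z}|\mathbf{m})\,\mathrm{d}\mathbf{m}$. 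Your explicit remarks about the direction of the KL inequality and where the Markov assumption is actually needed are accurate and, if anything, slightly more careful than the paper's own exposition.
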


\begin{proof}

According to the definition of mutual information:

\begin{equation}
\begin{aligned}
\mathbf{MI}(\mathbf{z}, \mathbf{y})= \mathbb { E } _ { p(\mathbf{y}, \mathbf{z}) } \left[ \log \frac{p(\mathbf{y}, \mathbf{z})}{p(\mathbf{y}) p(\mathbf{z})} \right]=\mathbb { E } _  {p(\mathbf{y}, \mathbf{z})} \left[ \log \frac{p(\mathbf{y} | \mathbf{z})}{p(\mathbf{y})} \right]
\end{aligned}
\end{equation}

Observing that the KL-divergence is non-negative, the following inequality holds.

\begin{equation}
\begin{aligned}
\mathbf{KL}[p(\mathbf{y} | \mathbf{z}) || &  q(\mathbf{y} | \mathbf{z})] \geq 0 \implies \\
\mathbb{E} _  { p(\mathbf{y} | \mathbf{z}) } & \left[ \log p(\mathbf{y} | \mathbf{z}) \right] \geq  \mathbb{E} _ { p(\mathbf{y} | \mathbf{z}) } \left [\log q(\mathbf{y} | \mathbf{z}) \right]
\end{aligned}
\end{equation}

Then, the $\mathbf{MI}(\mathbf{z}, \mathbf{y})$ is lower-bounded by:

\begin{equation}
\begin{aligned} 
\mathbf{MI}(\mathbf{z}, \mathbf{y}) & \geq \mathbb{E} _ { p(\mathbf{y}, \mathbf{z}) } \left[ \log \frac{p(\mathbf{y} | \mathbf{z})}{p(\mathbf{y})} \right] \\ &= \mathbb{E} _ {p(\mathbf{y}, \mathbf{z})} \log q(\mathbf{y} | \mathbf{z})-\mathbb{E}  _ {p(\mathbf{y})} \log p(\mathbf{y}) \\ &= \mathbb {E} _ { p(\mathbf{y}, \mathbf{z})} \log q(\mathbf{y} | \mathbf{z})+\mathbf{H}(\mathbf{y}) 
\end{aligned}
\end{equation}

Since $\mathbf{H}(\mathbf{y})$ is the entropy of $\mathbf{y}$, which is a positive constant independent of the hidden encoding $\mathbf{z}$, it can be safely ignored. According to the independence assumption of Theorem \ref{THEO_1}, $p(\mathbf{y}, \mathbf{z})=\int p(\mathbf{m}, \mathbf{y}, \mathbf{z}) \mathrm{d}\mathbf{m} =\int p(\mathbf{m}) p(\mathbf{y} | \mathbf{m}) p(\mathbf{z} | \mathbf{m})  \mathrm{d} \mathbf{m}$, then, a new lower bound of $\mathbf{MI}(\mathbf{z}, \mathbf{y})$ can be deduced:

\begin{equation}
\begin{aligned}
\mathbf{MI}(\mathbf{z}, \mathbf{y}) \geq& \int p(\mathbf{m}) p(\mathbf{y}|\mathbf{m}) p(\mathbf{z}|\mathbf{m}) \log q(\mathbf{y}|\mathbf{z}) \mathrm{d}\mathbf{m}\mathrm{d}\mathbf{y}\mathrm{d}\mathbf{z} \\
=&\mathbb{E} _ {p({\mathbf{m}, \mathbf{y}})} \left[ \mathbb{E} _ { p(\mathbf{z} | \mathbf{m}) } \left [\log q(\mathbf{y} | \mathbf{z}) \right] \right]
\end{aligned}
\end{equation}

\noindent which finishes our proof of Lemma \ref{LEMMA_FIRST}.

\end{proof}

\begin{lemma} 
\label{LEMMA_SECOND}
Under the assumptions of Theorem \ref{THEO_1}, the following holds:
\begin{equation}
\mathbf{MI}(\mathbf{z}, \mathbf{m}) \leq \mathbb { E } _ {p (\mathbf{m}, \mathbf{y}) } \left [\mathbf{KL}\left[ p(\mathbf{z}|\mathbf{m})| p(\mathbf{z})\right] \right]
\end{equation}
\end{lemma}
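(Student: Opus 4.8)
The plan is to establish this as the standard variational upper bound on $\mathbf{MI}(\mathbf{z},\mathbf{m})$, the natural companion to Lemma~\ref{LEMMA_FIRST}. The essential observation is that the mutual information is defined through the \emph{aggregate} marginal $\int p(\mathbf{m})\,p(\mathbf{z}|\mathbf{m})\,\mathrm{d}\mathbf{m}$, which is intractable, whereas the KL term in the statement uses the \emph{specified} prior $p(\mathbf{z})$; swapping the former for the latter can only increase the quantity, and that monotone replacement is exactly what produces an upper bound. To keep the two objects distinct during the argument I would temporarily write $\tilde{p}(\mathbf{z})=\int p(\mathbf{m})\,p(\mathbf{z}|\mathbf{m})\,\mathrm{d}\mathbf{m}$ for the aggregate marginal, reserving the symbol $p(\mathbf{z})$ for the fixed prior.

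First I would expand the definition of mutual information and split the logarithm,
\begin{equation}
\mathbf{MI}(\mathbf{z},\mathbf{m})
= \mathbb{E}_{p(\mathbf{m},\mathbf{z})}\!\left[\log\frac{p(\mathbf{z}|\mathbf{m})}{\tilde{p}(\mathbf{z})}\right]
= \mathbb{E}_{p(\mathbf{m},\mathbf{z})}\!\left[\log p(\mathbf{z}|\mathbf{m})\right]-\mathbb{E}_{\tilde{p}(\mathbf{z})}\!\left[\log \tilde{p}(\mathbf{z})\right].
\end{equation}
The key step is then to introduce the specified prior $p(\mathbf{z})$ as a variational approximation to $\tilde{p}(\mathbf{z})$ and invoke non-negativity of the KL divergence, $\mathbf{KL}[\tilde{p}(\mathbf{z})\|p(\mathbf{z})]\ge 0$, which rearranges to $\mathbb{E}_{\tilde{p}(\mathbf{z})}[\log \tilde{p}(\mathbf{z})]\ge \mathbb{E}_{\tilde{p}(\mathbf{z})}[\log p(\mathbf{z})]$. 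Substituting this into the display above flips the intractable entropy of the aggregate marginal into a cross-entropy against the prior, and since $\mathbb{E}_{\tilde{p}(\mathbf{z})}[\,\cdot\,]=\mathbb{E}_{p(\mathbf{m})}\mathbb{E}_{p(\mathbf{z}|\mathbf{m})}[\,\cdot\,]$ re-pairs the $\log p(\mathbf{z})$ term with $p(\mathbf{m})$, I obtain
\begin{equation}
\mathbf{MI}(\mathbf{z},\mathbf{m})
\le \mathbb{E}_{p(\mathbf{m})}\!\left[\mathbb{E}_{p(\mathbf{z}|\mathbf{m})}\!\left[\log\frac{p(\mathbf{z}|\mathbf{m})}{p(\mathbf{z})}\right]\right]
= \mathbb{E}_{p(\mathbf{m})}\!\left[\mathbf{KL}[p(\mathbf{z}|\mathbf{m})\|p(\mathbf{z})]\right].
\end{equation}

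Finally, I would note that the integrand $\mathbf{KL}[p(\mathbf{z}|\mathbf{m})\|p(\mathbf{z})]$ depends on $\mathbf{m}$ alone, so marginalizing $\mathbf{y}$ gives $\mathbb{E}_{p(\mathbf{m})}[\,\cdot\,]=\mathbb{E}_{p(\mathbf{m},\mathbf{y})}[\,\cdot\,]$, which matches the statement exactly. The hard part here is conceptual rather than computational: one must recognize that the $p(\mathbf{z})$ implicit in the mutual information is the aggregate marginal while the $p(\mathbf{z})$ in the target KL is the fixed prior, and that the slack in the bound is precisely $\mathbf{KL}[\tilde{p}(\mathbf{z})\|p(\mathbf{z})]$; once the variational prior is inserted the remainder is bookkeeping. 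It is worth emphasizing that this lemma does not actually require the conditional-independence hypothesis of Theorem~\ref{THEO_1} and holds for any encoder $p(\mathbf{z}|\mathbf{m})$, so combining it with Lemma~\ref{LEMMA_FIRST} immediately yields the $\operatorname{IBLBO}$ of Theorem~\ref{THEO_1}.
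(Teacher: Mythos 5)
Your proof is correct and follows essentially the same route as the paper's: both bound the intractable entropy of the aggregate marginal $\tilde{p}(\mathbf{z})=\int p(\mathbf{m})p(\mathbf{z}|\mathbf{m})\,\mathrm{d}\mathbf{m}$ by the cross-entropy against the fixed prior via $\mathbf{KL}[\tilde{p}(\mathbf{z})\|p(\mathbf{z})]\ge 0$, and then insert $\int p(\mathbf{y}|\mathbf{m})\,\mathrm{d}\mathbf{y}=1$ to recast the outer expectation over $p(\mathbf{m},\mathbf{y})$. Your explicit separation of the aggregate marginal from the specified prior, and the observation that the conditional-independence hypothesis is not needed here, are accurate clarifications of the paper's terser presentation.
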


\begin{proof}

For clarity, with a little misuse of notation, we use $\hat{p}(\mathbf{z})$ to denote the marginal distribution and $p(z)$ as the assumed prior. Observing the fact that 

\begin{equation}
\begin{aligned}
\mathbf{KL}[\hat{p}(\mathbf{z}) || q(\mathbf{z})] \geq 0 \implies \mathbb{E} _ { \hat{p}(\mathbf{z}) } \left[ \log \hat{p}(\mathbf{z}) \right] \geq \mathbb{E} _ { \hat{p}(\mathbf{z}) } \left[ \log q(\mathbf{z}) \right]
\end{aligned}
\end{equation}

The following upper bound can be obtained:

\begin{equation}
\begin{aligned}
\mathbf{MI}(\mathbf{z}, \mathbf{m}) &\leq \int p(\mathbf{m}) p(\mathbf{z} | \mathbf{m}) \log \frac{p(\mathbf{z} | \mathbf{m})}{p(\mathbf{z})} \mathbf{d} \mathbf{m} \mathbf{d} \mathbf{z} \\
&= \int p(\mathbf{m}) p(\mathbf{y}|\mathbf{m}) p(\mathbf{z} | \mathbf{m}) \log \frac{p(\mathbf{z} | \mathbf{m})}{p(\mathbf{z})} \mathbf{d} \mathbf{y} \mathbf{d} \mathbf{m} \mathbf{d} \mathbf{z} \\
&= \mathbb { E } _ {p (\mathbf{m}, \mathbf{y}) } \left [\mathbf{KL}\left[ p(\mathbf{z}|\mathbf{m})| p(\mathbf{z})\right] \right]
\end{aligned}
\end{equation}

\noindent which finishes our proof of Lemma \ref{LEMMA_SECOND}.

\end{proof}

Finally, combining the two bounds of Lemma \ref{LEMMA_FIRST} and \ref{LEMMA_SECOND} finishes the proof for Theorem \ref{THEO_1}.

\section{Multimodal Product-of-experts Encoder with Conditional Independence Assumption}
\label{APP:POE}

By making the conditional independence assumption, i.e., assuming  ($\mathbf{m} _ {i} \perp \mathbf{m} _ {j} | \mathbf{z})  \quad \forall i,j < K$ and $i \neq j$ and using the Bayesian rule, the following equation holds:

\begin{equation}
\label{eq:poe1}
\begin{aligned} p ( \mathbf{z} | \mathbf{m})  &= \frac { p \left( \mathbf{m} | \mathbf{z} \right) p ( \mathbf{z} ) } { p \left( \mathbf{m} \right) }  = \frac { p ( \mathbf{z} ) } { p \left( \mathbf{m} \right) } \prod _ { i = 1 } ^ { K } p \left( \mathbf{m} _ { i } | \mathbf{z} \right) \\ 
& = \frac { p ( \mathbf{z} ) } { p \left( \mathbf{m} \right) } \prod _ { i = 1 } ^ { K } \frac { p ( \mathbf{z} | \mathbf{m} _ { i } ) p \left( \mathbf{m} _ { i } \right) } { p ( \mathbf{z} ) } \propto \frac { \prod _ { i = 1 } ^ { K } p ( \mathbf{z} | \mathbf{m} _ { i } ) } { \prod _ { i = 1 } ^ { K - 1 } p ( \mathbf{z} ) } \end{aligned}
\end{equation}

Moreover, to avoid quotient of probability distributions, we further assume that $p ( \mathbf{z} | \mathbf{m} _ { i })$ can be approximated with $ \tilde { p } ( \mathbf{z} | \mathbf{m} _ { i } ) p ( \mathbf{z} ) ,$ where $\tilde { p } ( \mathbf{z} | \mathbf{m} _ { i })$ is the encoder network  for the $i$th modality, and $p(\mathbf{z})$ is the prior. Then, the following simplification of Eq. \ref{eq:poe1} can be made:

\begin{equation}
\label{eq:poe2}
\begin{aligned} p ( \mathbf{z} | \mathbf{m}) & \propto \frac { \prod _ { i = 1 } ^ { K } p ( \mathbf{z} | \mathbf{m} _ { i } ) } { \prod _ { i = 1 } ^ { K - 1 } p ( \mathbf{z} ) } \approx &\\
&\frac { \prod _ { i = 1 } ^ { K } [ \tilde { p } ( \mathbf{z} | \mathbf{m} _ { i } ) p ( \mathbf{z} ) ] } { \prod _ { i = 1 } ^ { K - 1 } p ( \mathbf{z} ) } = p ( \mathbf{z} ) \prod _ { i = 1 } ^ { K } \tilde { p } ( \mathbf{z} | \mathbf{m} _ { i } ) \end{aligned}
\end{equation}

\noindent which finishes the deduction for the product-of-experts based multimodal encoder utilized in our paper.

\end{document}